\def\eqref#1{equation~\ref{#1}}
\def\1{\bm{1}}
\DeclareMathAlphabet{\mathsfit}{\encodingdefault}{\sfdefault}{m}{sl}
\SetMathAlphabet{\mathsfit}{bold}{\encodingdefault}{\sfdefault}{bx}{n}
\newcommand{\E}{\mathbb{E}}
\newcommand{\Var}{\mathrm{Var}}
\newcommand{\Cov}{\mathrm{Cov}}
\newtheorem{theorem}{Theorem}
\newtheorem{lemma}{Lemma}
\newtheorem{proposition}{Proposition}
\newcommand{\methodname}{\textsc{ARES}\xspace}
\definecolor{TopHeaderBlue}{HTML}{E7F2FA}
\definecolor{SecondHeaderGray}{HTML}{F2F2F2}
\definecolor{CloseSourceHeaderGreen}{HTML}{E8F5E9}
\definecolor{OSGeneralHeaderYellow}{HTML}{FFF9C4}
\definecolor{OSReasoningHeaderOrange}{HTML}{FFE0B2}
\definecolor{OurSectionHeaderBlue}{HTML}{E3F2FD}
\definecolor{OurModelRowBlue}{HTML}{E3F2FD}
\definecolor{DeltaRowPink}{HTML}{FCE4EC}
\definecolor{PlotPurple}{HTML}{EBE6F2}
\definecolor{PlotBlue}{HTML}{C9DAF8}
\definecolor{PlotGreen}{HTML}{D9EAD3}
\definecolor{HeaderGray}{HTML}{F2F2F2}
\title{ARES: Multimodal \underline{A}daptive \underline{R}easoning via Difficulty-Aware Token-Level \underline{E}ntropy \underline{S}haping}
\author{
    \quad\quad\textbf{Shawn Chen}$^1$\thanks{Equal contributions.},
    \textbf{Yue Guo}$^1$\footnotemark[1],
    \textbf{Yimeng Ye}$^3$,
    \textbf{Shijue Huang}$^2$,
    \textbf{Wenbo Hu}$^1$,
    \textbf{Haoxi Li}$^2$, \\
    \quad\quad\quad\quad\quad\quad \textbf{Manyuan Zhang}$^4$,
    \textbf{Jiayu Chen}$^2$,
    \textbf{Song Guo}$^2$,
    \textbf{Nanyun Peng}$^1$ \\
    {\normalsize$^1$University of California, Los Angeles} \quad
    {\normalsize$^2$The Hong Kong University of Science and Technology} \quad \\
    \quad\quad\quad\quad\quad\quad {\normalsize$^3$Columbia University} \quad
    {\normalsize$^4$The Chinese University of Hong Kong} \\
}
\newtheorem{corollary}{Corollary}
\begin{document}

\maketitle

\begin{center}
\vspace{-26pt}
  \textbf{Code Page:} \href{https://github.com/shawn0728/ARES}{\textcolor{purple}{https://github.com/shawn0728/ARES}}
\end{center}

\vspace{-5mm}

\begin{center}
 \includegraphics[width=1.0\linewidth]{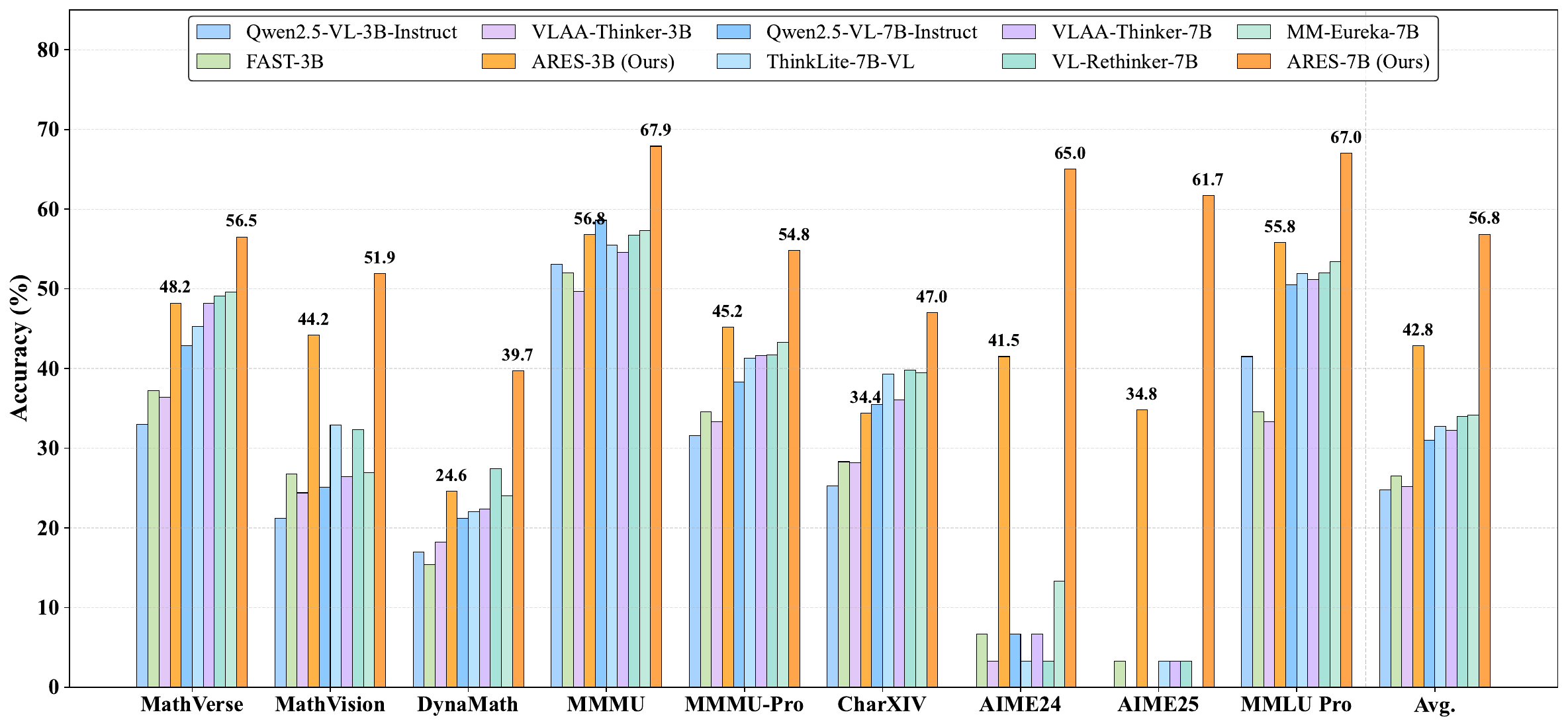}
  \captionof{figure}{Accuracy comparison across selected open-source reasoning models on nine multimodal and textual benchmarks. Each group represents 3B-scale and 7B-scale models evaluated under the same benchmarks. The rightmost column (“Avg.”) reports the average accuracy over all selected benchmarks, showing the overall advantage of the proposed adaptive reasoning framework. Our \textbf{ARES-7B} achieves superior performance.}
  \label{fig:results_bar}
\end{center}

\begin{abstract}
Recent advances in multimodal large reasoning models (MLRMs) have substantially
improved their ability to solve complex textual and visual tasks. However, these
models tend to \emph{overthink} on
simple problems, producing unnecessarily lengthy reasoning traces, while
\emph{under-exploring} on challenging ones, leading to missed solutions. To 
address this imbalance, we propose \textbf{ARES}, a unified open-source framework
for \emph{adaptive reasoning} that dynamically allocates exploration effort based
on task difficulty. Our approach is motivated by two key empirical findings:
(i) while single-token entropy is noisy, \emph{high window-entropy (HWE)
tokens} (token-level entropies averaged under a sliding window) can reliably capture reasoning-critical moments; and (ii) reducing HWE usage
benefits easy problems, while increasing it is essential for solving hard ones.
Building on these insights, ARES introduces a two-stage training pipeline. In the
\emph{Adaptive Cold-Start} stage, we curate multimodal and textual data paired
with reasoning traces of length proportional to problem difficulty, equipping the
model with initial difficulty awareness. In the second stage, we develop
\emph{Adaptive Entropy Policy Optimization (AEPO)}, which uses HWE tokens as
exploration triggers to decide \emph{when to explore}, and a hierarchical entropy
reward with dynamic KL control to decide \emph{how much to explore}. Extensive
experiments demonstrate that ARES achieves superior performance and
reasoning efficiency across diverse mathematical, logical, and multimodal
benchmarks, while closing the gap to leading commercial systems under
significantly lower inference costs. 
\end{abstract}

\section{Introduction}

Leveraging long Chain-of-Thought (CoT) reasoning with reflection, Multimodal Large Reasoning Models (MLRMs) have showcased strong capabilities on both complex textual and visual tasks~\citep{comanici-gemini-arxiv-20205,guo-seed1p5vl-arxiv-2025}. While long CoT reasoning substantially improves performance on complex reasoning tasks, it often causes models to generate unnecessarily lengthy reasoning even for easy tasks~\citep{qu-efficient-2025-arxiv}. This substantially increases inference costs and latency, limiting MLRMs' usability in various real-world scenarios.

Recent studies have explored both training-free~\citep{han2025tokenbudgetawarellmreasoning, yang-dynamicexit-arxiv-2025} and training-based strategies~\citep{zhang2025othinkr1intrinsicfastslowthinking, arora2025traininglanguagemodelsreason, ling2025fasteasydeephard, tu2025learningthinkshapingadaptive} to mitigate the issue of verbose long-thought outputs. While these methods alleviate overthinking, they often cause model performance degradation~\citep{huang-adactrl-arxiv-2025}. 
To balance model's efficiency and accuracy, adaptive reasoning mechanisms have become a promising research direction, which dynamically adjust reasoning effort to trade off performance against computational cost. 
Existing approaches attempt this by either curating cold-start data with varying difficulty~\citep{wang-adaptivereasoning-arxiv-2025,zhang2025othinkr1intrinsicfastslowthinking, yu2025longshortchainofthoughtmixturesupervised} or using difficulty-aware penalties during RL~\citep{huang2025adactrladaptivecontrollablereasoning,shen2025dastdifficultyadaptiveslowthinkinglarge}. 
However, such methods tend to encourage over-exploration on difficult problems, leading to unnecessarily verbose reasoning traces, and thus still fail to deliver significant performance improvements.

To address these challenges, we seek to answer two foundational questions for adaptive reasoning mechanism: \textit{When should a model be encouraged to explore?} \textit{How much reasoning effort should be allocated during exploration?} To gain some insights, the paper presents a detailed preliminary analysis of exploration during the RL stage of MLRMs, specifically investigating how exploration affects the overall model's performance and efficiency. We meticulously design experiments by constructing an easy, medium and hard set for study the trade off of RL exploration cost against accuracy. As illustrated in Figure~\ref{fig:easy-hard}, our findings can be summarized in two key observations. Firstly, high token-level entropy in MLRMs not only captures tokens relevant to reasoning but also often includes noise such as punctuation, formulas, or other superficial elements. By comparison, window entropy, the mean entropy computed over consecutive tokens, more reliably identifies pivotal decision points that guide the reasoning trajectory across multiple potential pathways. Secondly, for easy problems, reducing the number of high window entropy (HWE) tokens shortens the reasoning trace and improves performance. For complex problems, by contrast, increasing the number of HWE tokens promotes more thorough exploration, thereby enhancing the model’s ability to solve challenging tasks. These findings highlight HWE tokens as an exploration trigger for adaptive reasoning in MLRMs, improving performance while reducing inference costs.

Motivated by the analysis findings, we propose a two-stage adaptive reasoning training approach ARES, \textbf{A}daptive \textbf{R}easoning via difficulty-aware token-level \textbf{E}ntropy reward \textbf{S}haping for multimodal models. Specifically, first, during the Adaptive Cold-Start stage, we instill an initial difficulty awareness by fine-tuning the model on data where reasoning length is explicitly correlated with problem complexity.  Then in the second stage, we introduce \textbf{A}daptive-\textbf{E}ntropy \textbf{P}olicy \textbf{O}ptimization (AEPO), which uses high window entropy regions to trigger exploration and a hierarchical reward aware of difficulty to control its depth. This two-stage approach enables ARES to adaptively allocate reasoning effort, rewarding deep exploration on complex problems while penalizing overthinking on simpler ones. 

In summary, our main contributions are as follows:

~~$\bullet$ We empirically demonstrate that high–window-entropy (HWE) tokens can guide MLRMs in exploration and reveal distinct exploration behaviors between easy and hard tasks.

~~$\bullet$ We introduce ARES,  with meticulously curated high-quality data for cold-start, and AEPO with token-level entropy reward shaping for RL.

~~$\bullet$ Extensive experiments demonstrate that ARES achieves superior performance and inference efficiency across a wide range of mathematical, general knowledge, textual, and multimodal reasoning benchmarks.


\section{Relations Between Token Entropy and Reasoning Difficulty}

\begin{figure}[t]
  \centering
 \includegraphics[width=\linewidth]{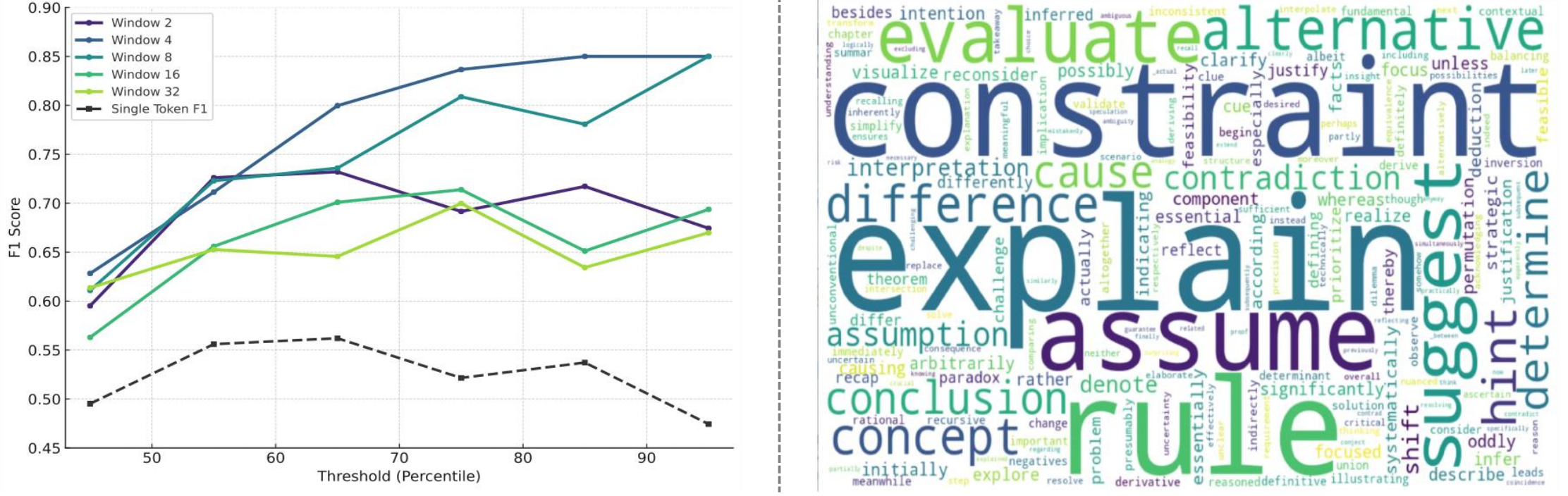}
 
  \caption{(a) F1 Score vs. threshold percentile across different window sizes. 
  Window-based entropy aggregation consistently outperforms single-token entropy, 
  especially at higher thresholds.\\ (b) A word cloud visualization of semantically filtered 
high-entropy tokens, where font size reflects relative frequency. 
These tokens (e.g., \emph{explain}, \emph{assume}, \emph{constraint}, 
\emph{conclude}) correspond to reasoning triggers that mark the onset 
of logical transitions, highlighting the interpretable basis of our 
entropy-based reward.}
  \label{fig:f1_score}
\end{figure}

\begin{figure}[t]
  \centering
 \includegraphics[width=1.0\linewidth]{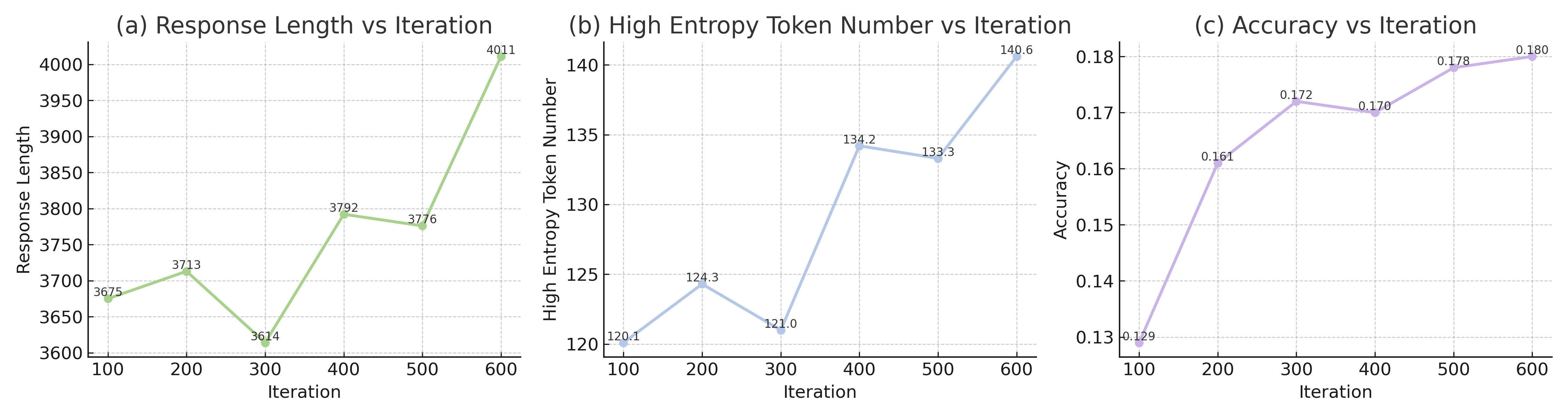}
  \caption{Training dynamics of ValLine GRPO on Coldstart Model: 
(a) average response length, 
(b) number of high-entropy tokens, 
and (c) accuracy, all measured across iterations. 
The trends indicate that the growth in high-entropy tokens 
is closely aligned with increases in response length and accuracy.}
  \label{fig:entropy_length_accuracy}
\end{figure}

In this section, we show two intriguing findings involving the exploration and effectiveness of MLRMs, paving the way for the proposed \methodname in Section~\ref{sec:method}. 

\subsection{Window Entropy Serves as a Trigger for Exploration in MLRMs}
\label{emp: window entropy}

Recent works \citep{wang-2080rule-arxiv-2025,zheng2025returnentropyelicitingexplore} highlight that \emph{high-entropy tokens} often mark the starting points of reasoning forks, where the model must branch into alternative continuations. Such points are critical for determining the depth of reasoning, since they open up new solution paths. However, token-level entropy $H_t$ discussed in Appendix~\ref{appendix:token-entropy} only captures local uncertainty at a single step and is often dominated by short-lived lexical ambiguities (e.g., punctuation, function words). Conversely, discourse markers that indicate major logical transitions (e.g., ``but,'' ``however'') may appear with very low entropy, despite signaling important shifts in reasoning. Hence, relying solely on single-token entropy is noisy and insufficient for reliably identifying when exploration should be triggered. 

From a cognitive perspective, humans rarely experience uncertainty as an instantaneous hesitation on a single word. Instead, when encountering a conceptual bifurcation, an entire segment of reasoning tends to remain unstable. Motivated by this intuition, we aggregate token-level entropies into a sliding-window statistic that captures the persistence of uncertainty across consecutive steps:
$$
\bar{H}_{t:w} \;=\; \frac{1}{w} \sum_{\tau=t}^{t+w-1} H_\tau,
$$
where $w$ is the window size hyper-parameter controlling how many consecutive tokens are considered.

This measure highlights regions where the model sustains high uncertainty over multiple tokens, thus providing a smoother and more semantically aligned indicator of reasoning-critical moments. Window entropy therefore reduces the noise of single-token signals while better localizing genuine reasoning bifurcations.

Our empirical analysis supports this intuition. As shown in Figure~\ref{fig:f1_score}(a), window entropy achieves consistently higher F1 scores than single-token entropy for detecting reasoning-critical tokens. Moderate windows (4–8 tokens) offer the best trade-off: they smooth local noise while remaining focused enough to capture sustained uncertainty. In contrast, single-token measures are overly sensitive to lexical artifacts, and long windows (16–32 tokens) dilute the signal with low-entropy tokens. These results validate window entropy as a more robust diagnostic of reasoning uncertainty and motivate its role as a central trigger in our Adaptive Exploration Policy Optimization (AEPO) framework (Section~\ref{when to explore}).

\subsection{Entropy--Difficulty Interaction}
\label{entropy-difficulty}
Exploration should not be uniform across all problem difficulties. Intuitively, 
for \emph{easy} problems, the model already possesses sufficient knowledge to 
arrive at the correct answer; excessive high-entropy branching in such cases 
is likely to lead to unnecessary verbosity or even distract from the correct 
reasoning path. In contrast, \emph{hard} problems inherently require sustained 
exploration: reaching the correct answer typically involves probing multiple 
solution paths, which manifests as longer responses and more high-entropy tokens.

We validate these intuitions by analyzing model generations using
window-entropy statistics. As illustrated in
Figure~\ref{fig:easy-hard}(b)(iii), exploration interacts strongly with problem
difficulty. In the \emph{easy} set, samples with high-entropy counts
\texttt{below} a batch-adaptive threshold attain higher accuracy while
producing shorter responses, indicating that suppressing unnecessary
exploration is beneficial. By contrast, in the \emph{hard} set,
\texttt{above}-threshold samples achieve higher accuracy but at the
cost of longer responses, showing that sustained exploration improves
success on challenging problems.

Conditioning on correctness further highlights this pattern. For
\emph{easy} problems, correct cases contain markedly
fewer high-entropy tokens and shorter responses compared to incorrect
ones. For \emph{hard} problems, correct cases typically involve longer
responses and more high-entropy tokens than incorrect ones, consistent
with the need for extended exploratory reasoning. Together, these
results establish a clear entropy--difficulty interaction: exploration
should be suppressed for easy tasks but encouraged for hard ones.

These results establish a clear \emph{entropy--difficulty interaction}: 
on easy problems, suppressing exploration reduces verbosity and improves accuracy, 
while on hard problems, encouraging additional high-entropy reasoning is 
crucial for success. This motivates our difficulty-aware exploration strategy, 
where entropy-based triggers are adaptively modulated across difficulty buckets 
to balance performance and reasoning efficiency. As further justified in Appendix~\ref{subsec:entropy-length}, we establish theoretically that response length grows approximately linearly with the number of high-entropy tokens, validating entropy as a principled proxy for reasoning effort.

\begin{figure}[t]
  \centering
 \includegraphics[width=1.0\linewidth]{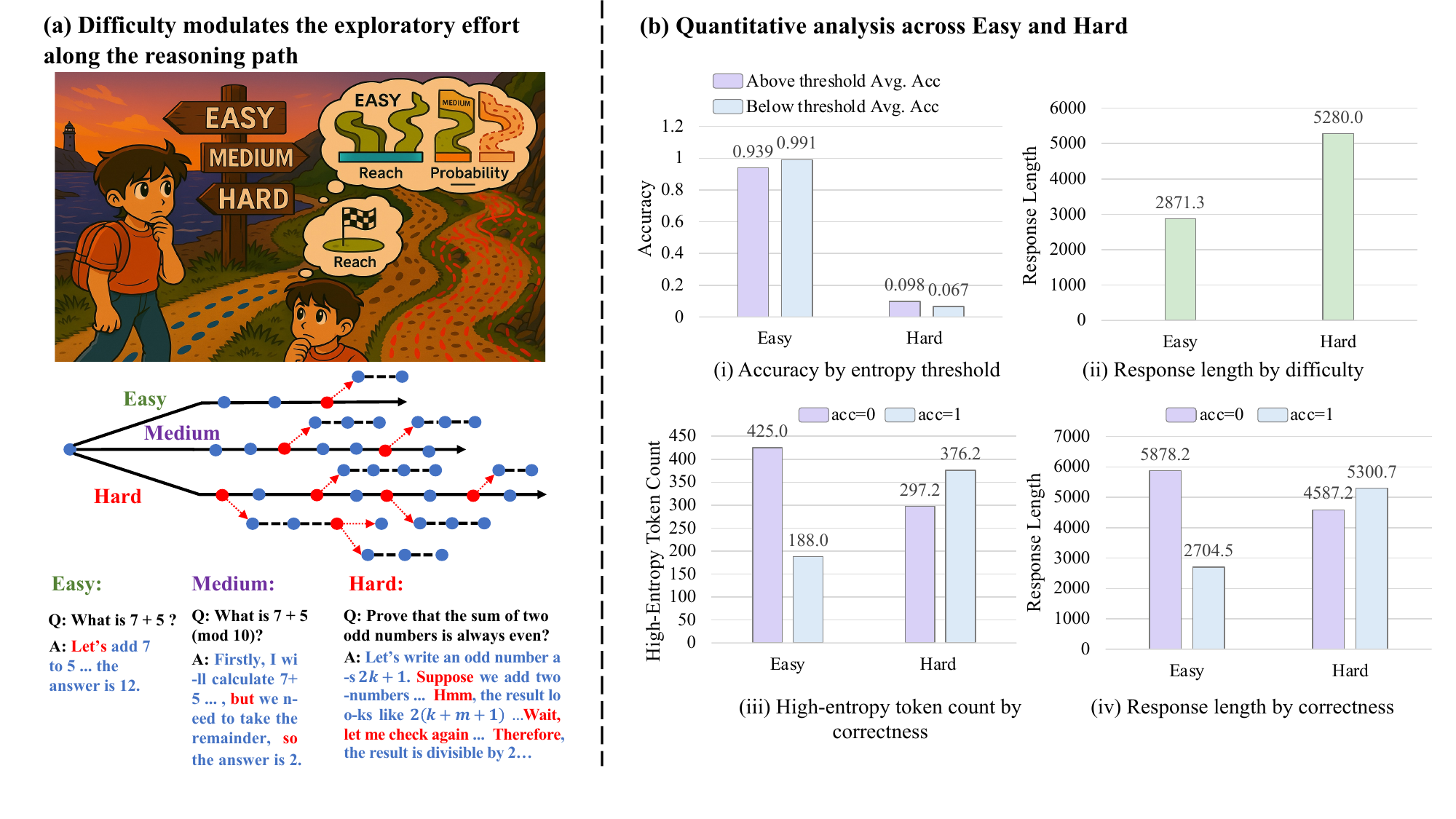}
    \caption{ \textbf{Entropy--difficulty interaction in exploration.}
    (a) Conceptual illustration: task difficulty modulates the reasoning
    trajectory, with easy problems requiring little exploration and hard
    problems benefiting from deeper branching.
    (b) Quantitative analysis: (i) for easy tasks, responses below the
    entropy threshold are both shorter and more accurate; for hard tasks,
    above-threshold exploration yields higher accuracy; (ii) response
    length increases significantly with difficulty; (iii) within each
    difficulty, correct cases use fewer high-entropy tokens for easy
    problems but more for hard problems; and (iv) correctness further
    amplifies this trend in response length. Together, these results show
    that limiting exploration improves efficiency on easy problems, while
    encouraging additional exploration is crucial for solving difficult
    ones.}
  \label{fig:easy-hard}
\end{figure}

\section{Method} 
\label{sec:method}
We aim to endow a multimodal policy with the ability to \emph{adapt} its reasoning depth:
produce short answers for easy instances and longer, exploratory chains for hard ones.
Our \textbf{ARES} method consists of two stages: a cold-start curriculum (AdaCS) that preserves length-controllable
modes, followed by a KL-regularized RL stage (AEPO) that couples a token-level \emph{high-entropy window}
detector with a \emph{difficulty-aware} KL budget, which is illustrated in Figure \ref{fig:flowchart}.

\begin{figure}[t]
  \centering
   \includegraphics[width=1.0\linewidth]
   {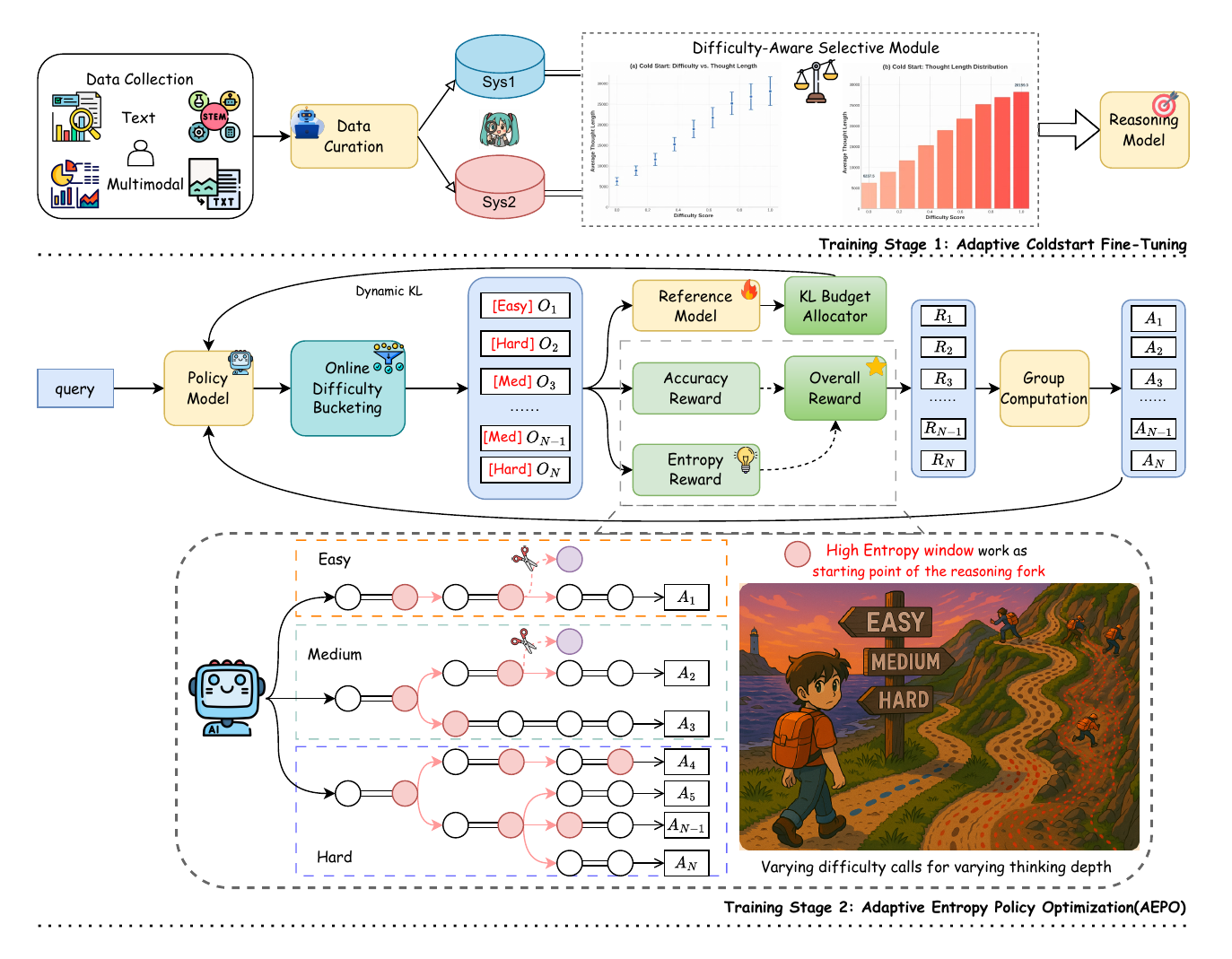} 
  \caption{ Overall training pipeline of our method. Stage 1 (Adaptive Coldstart Fine-Tuning): difficulty-aware selective data curation and adaptive KL-guided fine-tuning establish a strong initialization across text and multimodal inputs. Stage 2 (Adaptive Entropy Policy Optimization, AEPO): online difficulty bucketing and entropy-aware rollout allocate reasoning depth dynamically, with high-entropy windows serving as branching points for exploration. Together, the two stages enable uncertainty-aware, difficulty-adaptive reasoning for large language models.}
  \label{fig:flowchart}
\end{figure}

\begin{table}[h]
\centering
\caption{Textual and multimodal reasoning datasets source of ARES cold-start data.}
\setlength{\tabcolsep}{3pt} 
\renewcommand{\arraystretch}{1.1} 
\begin{adjustbox}{max width=\linewidth} 
\begin{tabular}{lr|lr|lr|lr}
\toprule
\multicolumn{4}{c|}{\textbf{Multimodal}} & \multicolumn{4}{c}{\textbf{Text-only}} \\
\textbf{Source} & \textbf{Samples} & \textbf{Source} & \textbf{Samples} &
\textbf{Source} & \textbf{Samples} & \textbf{Source} & \textbf{Samples} \\
\midrule
FigureQA~\citep{figureqa-samira-iclr-2019} & 100K & Super-CLEVR~\citep{superclevr-li-cvpr-2023} & 30K &
Big-Math-RL~\citep{bigmath-alon-arxiv-2025} & 251K & GAIR\_LIMO~\citep{ye2025limoreasoning} & 0.8K \\
MAVIS~\citep{mavis-zhang-iclr-2025} & 218K & TabMWP~\citep{tabmwp-lu-iclr-2023} & 38K &
Big-Math-RL-U~\citep{bigmath-alon-arxiv-2025} & 35K & s1K-1.1~\citep{muennighoff2025s1simpletesttimescaling} & 1K \\
K12-Vista~\citep{li-k12visita-arxiv-2025} & 33K & MMK12~\cite{mmeureka-meng-arxiv-2025} & 15K &
AM-Thinking-v1~\citep{ji-amthinking-arxiv-2025} & 1,890K & MegaScience~\citep{fan-megascience-arxiv-2025} & 1,250K \\ 
GeoQA~\citep{geoqa-chen-acl-2021} & 5K & UniGeo~\citep{unigeo-chen-emnlp-2022} & 16K &
OpenThoughts~\citep{openthoughts} & 114K & OpenMathR~\citep{moshkov2025aimo2} & 3,200K \\
Geometry3K~\citep{geometry3k-lu-acl-2021} & 2.1K & MultiMath~\citep{multimath-peng-arxiv-2024} & 300K &
DeepMath~\citep{he2025deepmath103klargescalechallengingdecontaminated} & 103K & OrcaMath~\citep{orcamath-arindam-arxiv-2024} & 200K \\
IconQA~\citep{iconqa-lu-nips-2021} & 107K & Grammer~\citep{chen2025advancing} & 30K &
OpenR1-220k~\citep{openr1} & 220K & NuminaMath-CoT~\citep{numina_math_datasets} & 859K \\
BMMR~\citep{xi-bmmr-arxiv-2025} & 89K & ViRL~\citep{wang-vlrethinker-arxiv-2025} & 39K & & & & \\
\bottomrule
\end{tabular}
\end{adjustbox}
\label{tab:data_source}
\end{table}

\subsection{AdaCS: Adaptive ColdStart Fine-tuning}
\label{sec:dataset}

Inspired by \citet{wang-adaptivereasoning-arxiv-2025}, during the cold-start phase, training with simple problems paired with short chains of thought (CoT) and difficult problems paired with long CoTs effectively enhances the model’s difficulty awareness and facilitates the acquisition of key high-window-entropy (HWE) tokens as well as reflective reasoning capabilities. We first curate a high-quality set of RLVR textual data and multimodal STEM tasks, which serves as the foundation for the cold-start stage. More details regarding dataset sources and preprocessing can be found in Section~\ref{ap:dataset}. Different from prior work~\citep{wang-adaptivereasoning-arxiv-2025} that discards samples with a pass rate of 1 and oversamples those with lower pass rates, our methodology is specifically designed to train the model to differentiate problem difficulty and generate responses of corresponding lengths. To achieve this, for each data source, we establish a target response length for every pass rate bracket, which is determined by the median response lengths of the easiest (pass rate = 1) and hardest (pass rate = 0) problems. We then sample responses whose lengths are proximate to this target, while ensuring uniform sampling across all pass rate brackets. This strategy is motivated by the objective of instilling a strong, explicit correlation between perceived problem difficulty and the verbosity of the generated rationale. Formally, for any intermediate pass rate $p \in (0,1)$, the target response length is defined as 
$$L_{\text{target}}(p) = (1-p)\cdot L(0) + p \cdot L(1),$$
where $L(0)$ and $L(1)$ denote the median token lengths of responses for problems with pass rates of $0$ and $1$, respectively, for a given data source. 
This design maximizes response length diversity across pass rates, thereby amplifying the distinction between easy and hard problems and enabling the model to better learn difficulty-aware reasoning strategies.

\subsection{AEPO: Adaptive-Entropy Policy Optimization}

To overcome the imbalance between overthinking easy problems and under-exploring difficult ones (Section~\ref{entropy-difficulty}), we introduce
\textbf{Adaptive-Entropy Policy Optimization (AEPO)}.
The central idea is to endow the policy with an \emph{adaptive exploration mechanism} that decides both \emph{when to explore} and \emph{how much to explore}, guided by entropy signals. A detailed flowchart of AEPO is presented in Algorithm \ref{alg:aepo} in the Appendix.

\subsubsection{When to explore.}
\label{when to explore}

To operationalize window entropy as an exploration trigger, we introduce a bucket-wise high-entropy threshold. Specifically, for each rollout trajectory, we compute the token-level entropies $\{H_t\}_{t=1}^T$ and extract the $95^{\text{th}}$ percentile value as the high-entropy threshold for that sequence. This choice is motivated by recent empirical findings \citep{wang-2080rule-arxiv-2025}, which show that RLVR predominantly reshapes the entropy distribution of tokens in the top $5\%$ percentile range, while low-entropy tokens remain comparatively stable. Hence, the $95^{\text{th}}$ percentile serves as a robust indicator of reasoning-critical uncertainty. To reduce noise at the single-sequence level, the thresholds are then averaged across trajectories in a mini-batch to yield a stable batch-level cutoff $\tau_{\text{high}}$:
\begin{small}
    \[
\tau_{\text{high}} = \frac{1}{|\mathcal{D}|} \sum_{y \in \mathcal{D}} \text{Quantile}_{0.95}(\{H_t(y)\}_{t=1}^{|y|}),
\]
\end{small}
where $\mathcal{D}$ denotes the set of trajectories in a batch.

During training, this threshold is updated dynamically on a batch-by-batch basis, allowing AEPO to adaptively align the exploration schedule with the evolving entropy distribution. At rollout time, whenever the windowed entropy $\bar{H}_{t:w}$ of a segment exceeds $\tau_{\text{high}}$, the model branches additional trajectories from step $t$; otherwise, the rollout proceeds linearly. In this way, exploration is triggered only at sustained high-uncertainty regions, concentrating computational resources on reasoning-critical moments while avoiding unnecessary branching at stable, low-entropy segments.

\subsubsection{How much to explore.}
\label{how much to explore}

Motivated from our preliminary studies, we next investigate the principle of \emph{how much exploration should be allocated once a reasoning-critical region is detected}. 
While window entropy serves as a reliable trigger for \emph{when} to explore, the amount of exploration must be carefully controlled to avoid excessive reasoning on simple tasks and insufficient exploration on complex ones. 
To achieve this balance, AEPO integrates two complementary mechanisms: 

\paragraph{Hierarchical Reward Design.} 
The goal of AEPO’s hierarchical reward is to adaptively regulate \emph{the degree of exploration} once reasoning-critical regions are detected. Building on the online difficulty buckets to split instances into three difficulty levels (Appendix~\ref{sec:prelim:difficulty}), we integrate accuracy and entropy-based shaping into a single constraint-driven formulation that requires no additional hyperparameters. 

Formally, for a trajectory $y$ with $N_{\text{HE}}$ high-entropy tokens, we define the bucket-dependent target as the batch mean:
\begin{small}
    \[
N^{\text{target}}_{\text{HE}}(d)
= \mathbb{E}_{\text{batch}}[\,N_{\text{HE}} \mid d\,], 
\qquad 
d \in \{\texttt{easy},\texttt{medium},\texttt{hard}\},
\]
\end{small}
which is updated online at each iteration to track the evolving distribution of exploratory behavior. To regulate deviations from this target, we introduce a closed-form Lagrange multiplier derived from batch statistics:
\begin{small}
    \[
\lambda_d
= \max\!\Bigg(
0,\,
\frac{\mathbb{E}_{\text{batch}}[N_{\text{HE}}\mid d]-N^{\text{target}}_{\text{HE}}(d)}
{\mathrm{Var}_{\text{batch}}[N_{\text{HE}}\mid d]+\varepsilon}
\Bigg).
\]
\end{small}
This term automatically scales the strength of entropy shaping without requiring manually tuned weights or learning rates. 

We further define $\Delta(y;d) = N_{\text{HE}} - N_{\text{HE}}^{\text{target}}(d)$ as the deviation of a response’s high-entropy token count from the target value. This deviation serves as the shaping signal that determines whether exploration should be encouraged (positive $\Delta$ for easy tasks) or suppressed (negative $\Delta$ for hard tasks).
The shaping direction is defined as:
\begin{small}
    \[
g_d(\Delta) =
\begin{cases}
\max(0,\,\Delta), & d=\texttt{easy}, \\[3pt]
|\Delta|, & d=\texttt{medium}, \\[3pt]
\max(0,\,-\Delta), & d=\texttt{hard}.
\end{cases}
\]
\end{small}
The qualitative effect of $g_d(\Delta)$ differs across difficulty levels. 
For easy problems, positive deviations ($\Delta > 0$) correspond to unnecessary over-exploration and are penalized. 
For medium problems, both under- and over-exploration are discouraged, resulting in a symmetric shaping curve around the target. 
For hard problems, negative deviations ($\Delta < 0$) indicate insufficient exploration and are penalized, thereby encouraging longer and more exploratory reasoning chains. 
We provide a detailed visualization and discussion of these curves in Appendix~\ref{reward design} (Figure~\ref{fig:entropy_reward_curves}).

Finally, the overall hierarchical reward is given by: 
\begin{small}
    \[
R(x,y;d) 
= R_{\text{acc}}(x,y) 
- \mathbf{1}\!\left[\mathrm{acc}(x,y)=0\right]\;\lambda_d \, g_d\!\big(\Delta(y;d)\big),
\]
\end{small}
\noindent
where hierarchical reward $R(x,y;d)$ unifies the correctness term $R_{\text{acc}}(x,y)$ with a difficulty-aware entropy regularization. 
Specifically, the entropy penalty is applied only when the response $y$ is incorrect, ensuring that already-correct solutions are not discouraged while 
encouraging exploration on failed attempts.

This formulation establishes a difficulty-aware intrinsic reward that suppresses unnecessary exploration on easy tasks, stabilizes reasoning depth around a batch-adaptive target on medium tasks, and promotes sustained exploration on hard tasks. Importantly, the entire scheme operates in a closed-form manner based solely on batch-level statistics, thereby achieving adaptive exploration control without introducing additional hyperparameters.

\paragraph{Dynamic KL Design.}
As established in Appendix~\ref{subsec:kl-as-budget}, the KL loss provides a valid
\emph{thinking budget}, while Appendix~\ref{app:proof-kl-variance} further shows that
a KL penalty inflates variance, motivating our exclusive use of the KL loss. To
realize adaptive exploration, AEPO employs only a \emph{token–adaptive weight}
mechanism:
\begin{small}
    \begin{equation}
\beta_{i,t} \;=\; \beta_{d} \cdot \rho_{t}, 
\qquad
\rho_{t} \;=\;
\begin{cases}
\rho \ (<1), & \text{if } t \in \mathcal{W}^{\text{valid}}, \\[6pt]
1, & \text{otherwise},
\end{cases}
\label{eq:token-adaptive-kl}
\end{equation}
\end{small}
where $\beta_d$ is the difficulty–dependent baseline and $\rho_t$ relaxes the KL
constraint inside validated high–entropy windows. This simple yet effective design
ensures that KL divergence is tightly controlled on easy tokens but adaptively
relaxed at reasoning-critical segments, functioning as a token-wise \emph{thinking
budget allocator}.

Then we present the surrogate objective of AEPO, which is adapted from the GRPO and DAPO formulations discussed in Appendix~\ref{app:rlvr_algorithms}. The objective function is defined as:
\begin{small}
    \begin{equation}
\label{eq:aepo-obj-no-kappa}
\begin{aligned}
\mathcal{J}_{\text{AEPO}}(\theta)
&= \mathbb{E}_{(q,a)\sim\mathcal{D},\,\{o^i\}_{i=1}^G\sim \pi_{\theta_{\text{old}}}(\cdot\mid q)} \Bigg[
\frac{1}{G}\sum_{i=1}^{G}\frac{1}{|o^i|}\sum_{t=1}^{|o^i|}
\min\!\Big(
  r_{i,t}(\theta)\,\tilde A_{i,t}, \\
&\qquad
  \mathrm{clip}\!\big(r_{i,t}(\theta),\,1-\epsilon_\ell,\,1+\epsilon_h\big)\,\tilde A_{i,t}
\Big) -\;\beta_{d(i),t}\,
D_{\mathrm{KL}}\!\big(\pi_\theta(\cdot\mid s_{i,t})\,\|\,\pi_{\mathrm{ref}}(\cdot\mid s_{i,t})\big)
\Bigg],
\end{aligned}
\end{equation}
\end{small}
where \(r_{i,t}(\theta)=\tfrac{\pi_\theta(o^i_t\mid s_{i,t})}{\pi_{\mathrm{ref}}(o^i_t\mid s_{i,t})}\),
\(s_{i,t}=(q,o^i_{<t})\), and \(\tilde A_{i,t}\) is the task+entropy shaped
(token-level) advantage induced by the hierarchical reward
(Section~\ref{how much to explore}).

\section{Experiments}

\subsection{Experimental Details}
\paragraph{Datasets}\label{ap:dataset}
The training of ARES follows our proposed methodology, utilizing carefully curated datasets for each stage. 
The initial adaptive cold-start phase employs a dataset of approximately 224K samples, including both high-quality textual data and multimodal STEM tasks (Table~\ref{tab:data_source}). 
We then employ a strong open-source cold-start model, Revisual-R1~\citep{chen2025advancing}, to generate eight responses per sample and compute the pass rate. 
To establish a robust initial policy, textual problems are answered using the powerful DeepSeek-R1 model~\citep{guo-deepseekr1-2025-arxiv}, while multimodal reasoning tasks are addressed with Seed-1.5-VL~\citep{guo-seed1p5vl-arxiv-2025}. 
The subsequent RLVR stage with AEPO then utilizes the ViRL39K dataset~\citep{wang-vlrethinker-arxiv-2025}, a collection of verifiable question–answer pairs derived from existing multimodal datasets.

\paragraph{Benchmarks} To comprehensively evaluate the capabilities of our model, we selected a diverse suite of benchmarks designed to test a wide range of reasoning skills. In the multimodal domain, we assess multimodal mathematical reasoning using MathVerse~\citep{zhang2024mathverse}, MathVision~\citep{wang2024measuring}, MathVista~\citep{lu2023mathvista}, DynaMath~\citep{zou2025dynamathdynamicvisualbenchmark}, and WeMath~\citep{qiao2024we}. To evaluate broader logical and general-purpose multimodal capabilities, we incorporated LogicVista~\citep{xiao2024logicvistamultimodalllmlogical}, MMMU~\citep{yue-mmmu-cvpr-2024}, MMMU-Pro~\citep{yue-mmmupro-arxiv-2024}, CharXIV~\citep{wang-charxiv-arxiv-2024}, and MMStar~\cite{chen-mmstar-arxiv-2024}. For text-based reasoning, we measured the model's ability to solve challenging mathematical problems with AIME24/25~\citep{li2024numinamath} and MATH-500~\citep{dataset_math}, while its general question-answering and reasoning proficiency were tested on GPQA~\citep{rein2024gpqa}, MMLU Pro~\citep{wang-mmlupro-nips-2024}, and BBEH~\citep{kazemi-bbeh-arxiv-2025}. For all evaluations, we report pass@1 accuracy, with the exception of the AIME24/25 benchmark, for which performance is measured using average@16 accuracy.

\paragraph{Baselines}
\label{ap:baselines}We benchmark ARES against three categories of baselines. First, we include leading proprietary systems such as GPT-4.1~\citep{gpt4p1-OpenAI-html-2025},
Gemini-2.5-Pro-Thinking~\citep{comanici-gemini-arxiv-20205},
Claude-4-Sonnet~\citep{claude4sonnet-Anthropic-html-2025}, and
Doubao-1.5-Thinking-Vision-Pro~\citep{guo-seed1p5vl-arxiv-2025}, which serve as
upper-bound references for multimodal reasoning. Second, we consider representative lightweight open-source MLLMs, including
Qwen2.5-VL-3B-Instruct~\citep{bai-qwen2p5vl-arxiv-2025},
FAST-3B~\citep{xiao-fast-arxiv-2025}, and
VLAA-Thinker-3B~\citep{chen-vlaa-arxiv-2025}, which are suitable for
efficient deployment. Third, we evaluate competitive 7B-scale open-source MLLMs, such as
Qwen2.5-VL-7B-Instruct, OpenVLThinker-1.2-7B~\citep{deng-openvlthinker-arxiv-2025},
MM-Eureka-Qwen-7B~\citep{mmeureka-meng-arxiv-2025}, MMR1-Math-v0~\citep{MMR1-Math2025},
ThinkLite-7B-VL~\citep{wang-thinklite-arxiv-2025}, VLAA-Thinker-7B,
VL-Rethinker-7B~\citep{wang-vlrethinker-arxiv-2025}, and
Vision-G1~\citep{zha-visiong1-arxiv-2025}. These represent widely adopted
open-source alternatives that support research reproducibility. Most of the open-source baselines are fine-tuned from
Qwen2.5-VL-3B-Instruct or Qwen2.5-VL-7B-Instruct, ensuring comparability in
architecture and training setup. Collectively, this suite provides a
comprehensive coverage of both proprietary and open-source models across
multiple scales.

\begin{table*}[t!]
\small
\setlength{\tabcolsep}{2pt}
\renewcommand{\arraystretch}{1.4} 

\caption{ Performance comparison of various MLLMs on diverse multimodal reasoning benchmarks. Within each model group (3B and 7B), the best results are highlighted in bold, and the second-best are underlined. Scores in \textit{italics} indicate that they are not reported in the original work and are obtained using the VLMEvalKit~\citep{duan2025vlmevalkitopensourcetoolkitevaluating} for evaluation. \textbf{MathVerse-V}, \textbf{DynaMath-W} and \textbf{WeMath-S} denotes the vision-only, worst, and strict settings, respectively.}
\label{tab:multimodal_performance_revised}

\begin{adjustbox}{width=\textwidth,center}
\begin{tabular}{lccccccccccr}
\toprule
\rowcolor{PlotBlue}
& \multicolumn{10}{c}{\textbf{Multimodal Reasoning Benchmarks}} & \\
\cmidrule(lr){2-12}
\rowcolor{HeaderGray}
\textbf{Model} &
\textbf{MathVerse-V} & \textbf{MathVision} & \textbf{MathVista} &
\textbf{DynaMath-W} & \textbf{WeMath} & \textbf{LogicVista} &
\textbf{MMMU} & \textbf{MMMU-Pro} & \textbf{CharXIV} & \textbf{MMStar}&
\textbf{Avg.} \\
\midrule

\rowcolor{PlotGreen}
\multicolumn{12}{c}{\textit{Close‑Source}} \\
\midrule
GPT-4.1  & \textit{59.8}  & \textit{51.8}  & 72.0  & \textit{48.3}  & 55.5  & \textit{63.8}  & 75.0  & \textit{65.0}  & \textit{55.8}  & 71.2 & 61.8\\
Gemini-2.5-Pro-Thinking  & \textit{81.2}  & \textit{55.3}  & \textit{83.8}  & \textit{57.1}  & 78.0  & \textit{75.2}  & 82.0  & \textit{76.5}  & \textit{69.3}  & 79.7& 73.8\\
Claude‑4‑Sonnet  & \textit{66.1}  & \textit{54.6}  & \textit{70.4}  & \textit{46.9}  & \textit{63.0}  & \textit{64.4}  & 74.4  & \textit{60.7}  & \textit{58.4}  & 66.9 & 62.6\\
Doubao-1.5-thinking-vision-pro  & \textit{80.4}  & 68.7  & 85.6  & \textit{60.5}  & \textit{78.0}  & \textit{71.8}  & 77.9  & 67.6  & \textit{63.4}  & 78.2& 73.2\\
\midrule
\rowcolor{PlotGreen}
\multicolumn{12}{c}{\emph{3B-scale MLLMs}} \\
\midrule
Qwen2.5-VL-3B-Instruct  & \textit{33.0}  & 21.2  & 62.3  & \textit{17.0}  & \textit{17.9}  & 35.8  & \underline{53.1}  & 31.6  & \textit{25.3}  & 51.1 & 34.8\\
FAST-3B  & \underline{\textit{37.2}}  & \underline{26.8}  & \underline{66.2}  & \textit{15.4}  & \textit{23.3}  & \textit{35.4}  &\textit{52.0}  & \underline{\textit{34.6}}  & \underline{\textit{28.3}}  & \underline{55.2} & 37.4\\
VLAA-Thinker-3B  & 36.4  & 24.4  & 61.0  & \underline{18.2}  & \underline{33.8}  & \underline{38.5}  & \textit{49.7}  & \textit{33.3}  & \textit{28.2}  & 53.4 &\underline{37.7}\\
\rowcolor{PlotBlue}
\textbf{ARES-3B}  & \textbf{48.2}  & \textbf{44.2}  & \textbf{66.8}  & \textbf{24.6}  & \textbf{40.8}  & \textbf{43.5}  & \textbf{56.8}  & \textbf{45.2}  & \textbf{34.4}  & \textbf{56.7}&\textbf{46.1}\\
\rowcolor{PlotGreen}
$\Delta$ (Ours–Open 3B SoTA) & +11.0 & +17.4 & +0.6 & +6.4 & +7.0 & +5.0 & +3.7 & +10.6 & +6.1 & +1.5 & +8.4 \\
\midrule

\rowcolor{PlotPurple}
\multicolumn{12}{c}{\emph{7B scale Models}} \\
\midrule
Qwen2.5-VL-7B-Instruct  & \textit{42.9}  & 25.1  & 68.2  & \textit{21.2}  & 36.2  & \textit{45.0}  & 58.6  & 38.3  & \textit{35.5}  & 62.1 & 43.3\\
OpenVLThinker-1.2-7B  & \textit{40.7}  & 25.9  & 72.3  & \textit{21.2}  & \underline{37.9}  & \textit{41.4}  & \underline{58.7}  & 42.9  & \textit{39.3}  & 62.9 & 44.3\\
MM-Eureka-Qwen-7B  & \textit{49.6}  & 26.9  & 73.0  & \textit{24.0}  & 34.7  & \textit{46.8}  & \textit{57.3}  & \underline{\textit{43.3}}  & \textit{39.5}  & 64.4 & 46.0\\
MMR1-Math-v0  & 45.1  & 30.2  & 71.0  & \textit{25.2}  & \textit{33.2}  & \underline{50.8}  & \textit{57.1}  & \textit{43.2}  & \textit{39.3}  & 63.8 & 45.9\\
ThinkLite‑7B‑VL  & \textit{45.3}  & \underline{32.9}  & 75.1  & \textit{22.0}  & \textit{26.5}  & \textit{40.7}  & 55.5  & \textit{41.3}  & \textit{39.3}  & 63.4 & 44.2\\
VLAA-Thinker-7B  & 48.2  & 26.4  & 68.0  & 22.4  & \textit{29.2}  & 48.5  & \textit{54.6}  & \textit{41.6}  & \textit{36.1}  & \underline{64.5} & 44.0\\
VL-Rethinker-7B  & \textit{49.1}  & 32.3  & \underline{74.9}  & \underline{27.4}  & \textit{27.8}  & \textit{44.5}  & 56.7  & 41.7  & \textit{39.8}  & 62.7& 45.7\\
Vision-G1  & \underline{\textit{50.0}}  & 31.3  & \textbf{76.1}  & \textit{27.2}  & 29.0  & 50.2  & 53.4  & 41.2  & \underline{\textit{41.0}}  & 63.1 & \underline{46.2}\\
\rowcolor{PlotBlue}
\textbf{ARES-7B}  & \textbf{56.5}  & \textbf{51.9}  & 74.6  & \textbf{39.7}  & \textbf{47.2}  & \textbf{54.1}  & \textbf{67.9}  & \textbf{54.8}  & \textbf{47.0}  & \textbf{65.3} &\textbf{55.9}\\
\rowcolor{PlotPurple}
$\Delta$ (Ours–7B Open SoTA) & +6.5 & +19.0 & -1.5 & +12.3 & +9.3 & +3.3 & +9.2 & +11.5 & +6.0 & +0.8 & +9.7 \\
\bottomrule
\end{tabular}
\end{adjustbox}

\end{table*}

\subsection{Main Results}
\label{sec:main-results}


As summarized in Tables~\ref{tab:multimodal_performance_revised} and~\ref{tab:textual_reasoning_benchmark}, MRLMs consistently outperform chat models across complex reasoning, knowledge-intensive, and general-purpose benchmarks. This confirms that training stages such as cold-start and RLVR effectively enhance self-reflection and reasoning ability. However, we also observe a sharp drop in textual reasoning for many open-source MRLMs likely due to reliance on multimodal verifiable data without cold-start fine-tuning.

In contrast, ARES achieves strong gains at both 3B and 7B scales. 
On multimodal benchmarks (Table~\ref{tab:multimodal_performance_revised}), ARES-7B 
exceeds the best open-source models by +19.0 on MathVision and +11.5 on MMMU-Pro. On textual reasoning (Table~\ref{tab:textual_reasoning_benchmark}), it attains 61.7 
on AIME25, where most 7B baselines score below 3.3. 
These results confirm that ARES improves core reasoning ability rather than overfitting to multimodal tasks.

Finally, Table~\ref{tab:mm_text_joint} and Figure~\ref{fig:dynamic} demonstrate the adaptivity of our training strategy. AdaCS (ARES-CS-7B) modulates response length by task difficulty, while AEPO (ARES-RL-7B) further enhances this effect—extending reasoning for hard tasks (e.g., OlympiadBench, AIME25) and shortening it for easier ones (e.g., GSM8K, MathVista). These results, consistent with our entropy–difficulty analysis (Section~\ref{entropy-difficulty}), show that AEPO improves both accuracy and token efficiency by encouraging exploration only when needed.

\begin{table}[t]
\centering
\begin{adjustbox}{width=\textwidth,center}
\begin{tabular}{lrrrrrrr}
\toprule
\rowcolor{PlotBlue}
\multicolumn{8}{c}{\textbf{Textual Reasoning Benchmarks}} \\
\cmidrule(lr){2-8}
\rowcolor{HeaderGray}
\textbf{Model} &
\textbf{AIME24} & \textbf{AIME25} & \textbf{MATH500} & \textbf{MMLU Pro} &
\textbf{BBEH} & \textbf{GPQA} & \textbf{Avg.} \\
\midrule

\rowcolor{PlotGreen}
\multicolumn{8}{c}{\textit{Close-Source}} \\
\midrule
GPT-4.1 & 46.7 & 23.3 & 89.6 & 80.2 & 32.7 & 67.7 & 56.7 \\
Gemini-2.5-Pro-Thinking & 66.7 & 88.0 & 85.8 & - & 17.7 & 82.3 & 68.1 \\
Claude-4-Sonnet & 46.7 & 33.3 & 92.0 & 83.2 & 35.0 & 68.7 & 59.8 \\
Doubao-1.5-Thinking-Vision-Pro & 80.0 & 53.3 & 96.8 & 83.5 & 39.7 & 71.2 & 70.8 \\
\midrule

\rowcolor{PlotGreen}
\multicolumn{8}{c}{\emph{3B-scale MLLMs}} \\
\midrule
Qwen2.5-VL-3B-Instruct & 0.0 & 0.0 & 55.8 & \underline{41.5} & 7.0 & \textbf{29.3} & 22.3 \\
FAST-3B & \underline{6.7} & \underline{3.3} & 55.4 & 34.6 & \textbf{8.6} & 25.3 & 22.3 \\
VLAA-Thinker-3B & 3.3 & 0.0 & \underline{61.4} & 33.3 & \underline{8.1} & 27.8 & 22.3 \\
\rowcolor{PlotBlue}
\textbf{ARES-3B} & \textbf{41.5} & \textbf{34.8} & \textbf{86.2} & \textbf{55.8} & \textbf{15.0} & \textbf{41.4} & \textbf{45.8} \\
\rowcolor{PlotGreen}
$\Delta$ (Ours–Open 3B SoTA) & +34.8 & +31.5 & +24.8 & +14.3 & +6.4 & +12.1 & +20.7 \\
\midrule

\rowcolor{PlotPurple}
\multicolumn{8}{c}{\emph{7B-scale MLLMs}} \\
\midrule
Qwen2.5-VL-7B-Instruct & 6.7 & 0.0 & 66.0 & 50.5 & 10.8 & 36.9 & 28.5 \\
OpenVLThinker-1.2-7B & 0.0 & 0.0 & 63.2 & 49.3 & \underline{13.6} & 31.3 & 26.2 \\
MM-Eureka-Qwen-7B & \underline{13.3} & 0.0 & 65.4 & \textbf{53.4} & 12.2 & \underline{41.4} & 31.0 \\
MMR1-Math-v0 & 3.3 & \underline{3.3} & 66.6 & 51.8 & 11.4 & 32.3 & 28.1 \\
ThinkLite-7B-VL & 3.3 & \underline{3.3} & 67.6 & 51.9 & 11.3 & 35.9 & 28.9 \\
VLAA-Thinker-7B & 6.7 & \underline{3.3} & 68.6 & 51.2 & 11.7 & 28.8 & 28.4 \\
VL-Rethinker-7B & 3.3 & \underline{3.3} & 64.4 & 52.0 & 12.9 & 31.8 & 28.0 \\
Vision-G1 & 3.3 & 0.0 & \underline{69.0} & \underline{53.0} & \textbf{13.9} & 31.3 & 28.4 \\
\rowcolor{PlotBlue}
\textbf{ARES-7B} & \textbf{65.0} & \textbf{61.7} & \textbf{95.2} & \textbf{67.0} & \textbf{18.9} & \textbf{49.5} & \textbf{59.6} \\
\rowcolor{PlotPurple}
$\Delta$ (Ours–7B Open SoTA) & +51.7 & +58.4 & +26.2 & +13.6 & +5.0 & +8.1 & +27.2 \\
\bottomrule
\end{tabular}
\end{adjustbox}
\caption{\textbf{Performance on textual reasoning benchmarks.} 
Results on AIME24/25, MATH500, MMLU Pro, BBEH, and GPQA. 
ARES-3B and ARES-7B substantially outperform all open-source baselines at 
their respective scales, achieving large average gains ($\Delta$ rows) and 
narrowing the gap to leading proprietary systems.}
\label{tab:textual_reasoning_benchmark}
\end{table}

\begin{figure}[t]
  \centering
 \includegraphics[width=\linewidth]{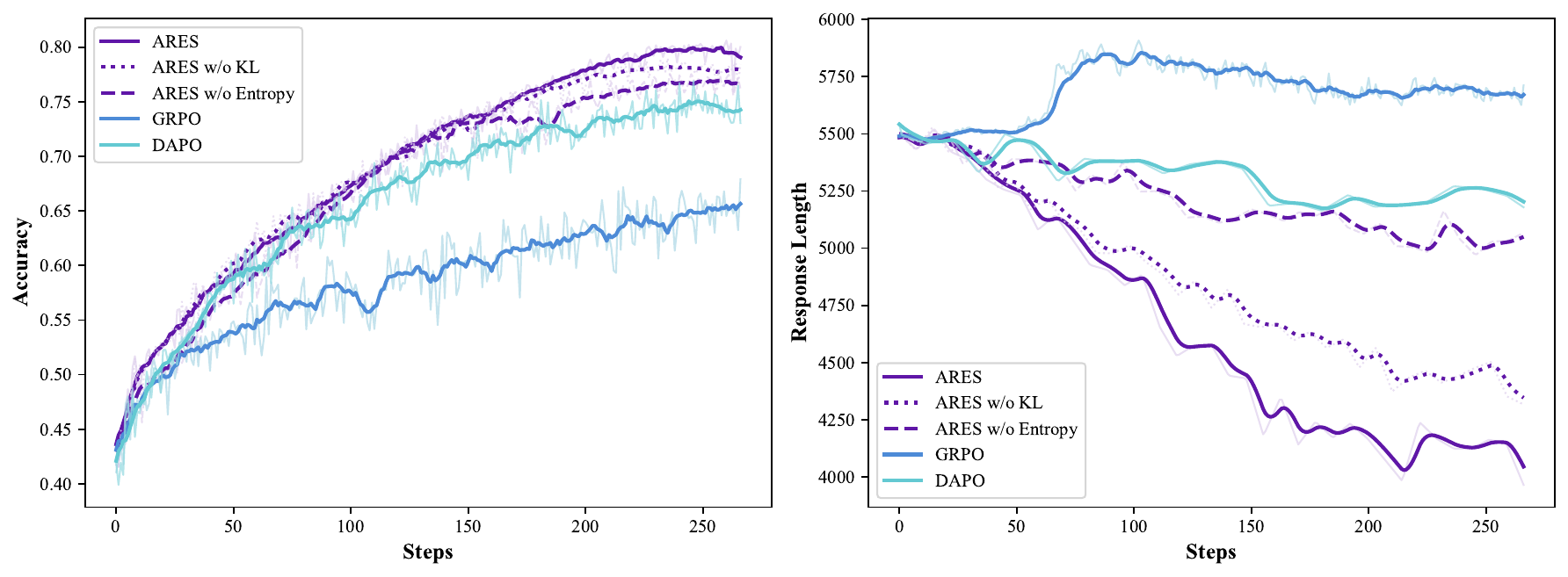}
  \caption{ Training dynamics of accuracy (\textbf{left}) and response length (\textbf{right}). 
  We observe that ARES achieves higher accuracy compared with its ablations (w/o KL, w/o Entropy) and baseline methods (GRPO, DAPO), 
  while also producing shorter and more stable responses during training. 
  These results indicate that the joint use of KL and entropy shaping contributes to improving both correctness and conciseness.}
  \label{fig:dynamic}
\end{figure}

\section{Visualization Results}
\label{visualization results}

\begin{figure}[t]
  \centering
  \includegraphics[width=1.0\linewidth]{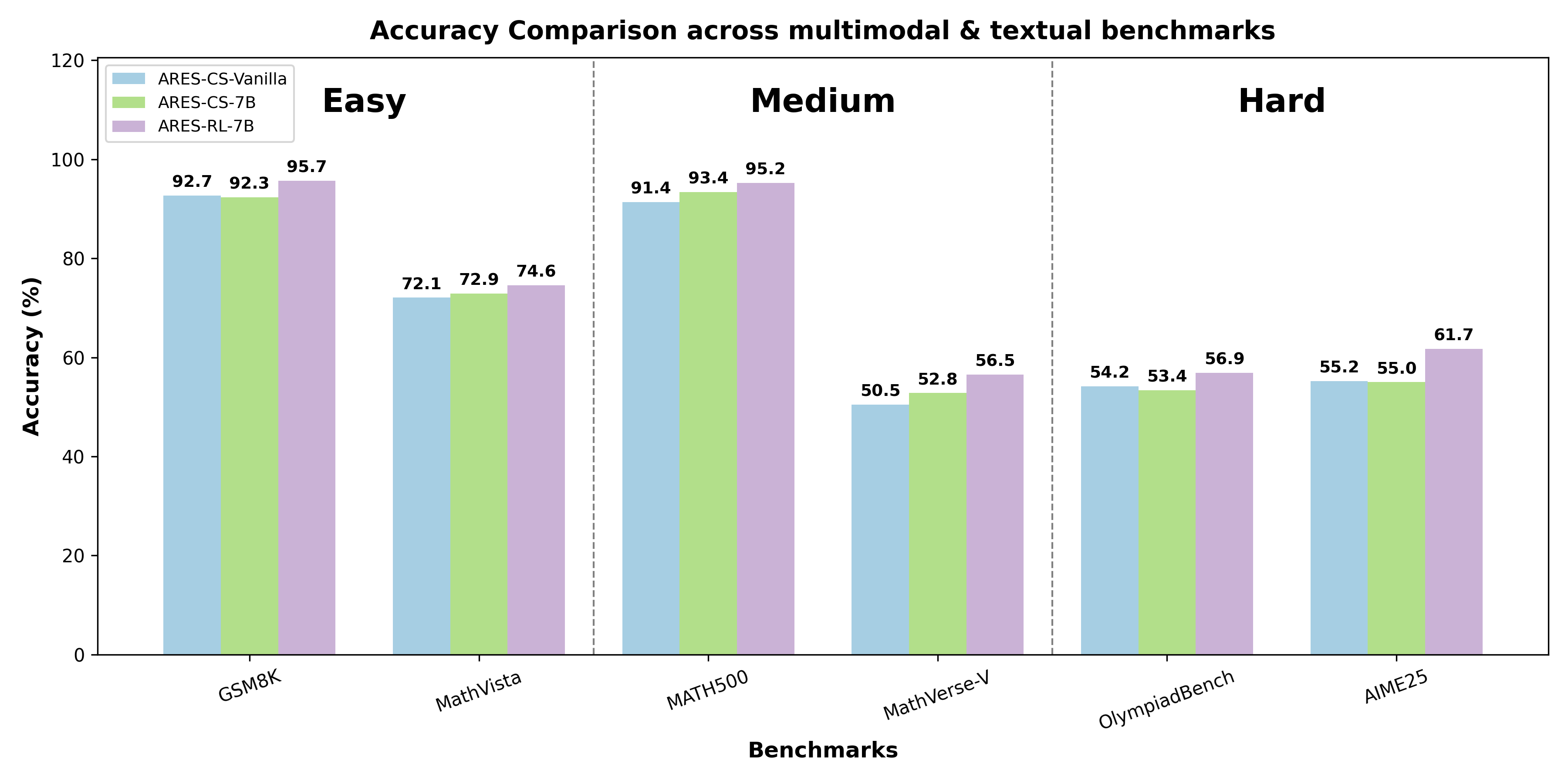}
  \caption{\textbf{Accuracy comparison across multimodal and textual benchmarks.} 
Accuracy of three model variants (ARES-CS-Vanilla, ARES-CS-7B, and ARES-RL-7B) on six benchmarks. 
Results are grouped into three difficulty levels (Easy, Medium, Hard). 
RL fine-tuning consistently improves accuracy across all benchmarks.}
  \label{fig:acc-benchmark}
\end{figure}

In this subsection, we provide additional visualization results to complement the quantitative findings in the main text. 
Figures~\ref{fig:acc-benchmark} and~\ref{fig:resp-len-benchmark} illustrate the accuracy and response length dynamics across different benchmarks and difficulty levels. 

As shown in Figure~\ref{fig:acc-benchmark}, RL fine-tuning (\texttt{ARES-RL-7B}) consistently improves accuracy over both cold-start variants (\texttt{ARES-CS-Vanilla} and \texttt{ARES-CS-7B}) on all benchmarks. 
The gains are evident across easy, medium, and hard categories, demonstrating that our proposed reinforcement learning approach enhances reasoning robustness without sacrificing performance on simpler tasks. 
This confirms the effectiveness of incorporating entropy-aware objectives in stabilizing training and improving generalization.

Figure~\ref{fig:resp-len-benchmark} further analyzes the average response length. 
We observe that RL fine-tuning reduces response length on easier datasets (e.g., GSM8K, MathVista), reflecting improved reasoning efficiency by eliminating unnecessary verbose reasoning. 
On more challenging benchmarks (e.g., OlympiadBench, AIME25), the response length of \texttt{ARES-RL-7B} increases compared to cold-start models, indicating that the model adaptively allocates more reasoning steps when required. 
This adaptive behavior---shortening trivial reasoning while allowing deeper exploration on difficult tasks---highlights the benefit of our entropy-shaped reward design.

Together, these visualizations provide clear evidence that our RL-based framework not only improves accuracy but also adapts reasoning length to problem difficulty, thereby balancing efficiency and effectiveness in multimodal reasoning tasks.

\subsection{Ablation Studies}

To validate the contributions of the key components within the \methodname framework, we conduct a series of ablation studies. We systematically isolate and evaluate the effects of our Hierarchical Reward Design (entropy shaping), and the Dynamic KL Design in our \methodname.

\subsubsection{Effectiveness of Hierarchical Reward Design}
To verify the impact of our hierarchical entropy-based reward, we conduct an experiment comparing a model trained with only this component against the GRPO baseline. As shown in Table~\ref{tab:ablation_kl_entropy_corrected}, this component alone yields a significant accuracy improvement of \textbf{+1.8} points, demonstrating that better depth and valid exploration lead to better performance. Additionally, this adaptive regulation is clearly visible in Figure~\ref{fig:dynamic}. While the accuracy for this model (labeled ``ARES w/o KL'') consistently tracks above the GRPO baseline, its response length shows a marked and steady decrease. This shows the model is learning to curtail unnecessary reasoning (i.e., overthinking) on simpler problems. The strong performance gains on difficult benchmarks like DynaMath-W (+3.2 points) further prove it correctly encourages deeper exploration when necessary. This evidence confirms the hierarchical reward's primary role as an adaptive regulator of reasoning depth.

\subsubsection{Effectiveness of Dynamic KL Design}
To verify that the dynamic KL mechanism allocates the exploration budget efficiently, we analyze its isolated contribution. The results in Table~\ref{tab:ablation_kl_entropy_corrected} show that adding only the dynamic KL component improves average accuracy by \textbf{+1.3} points over the GRPO baseline. Its role as an efficient budget allocator is best illustrated in Figure~\ref{fig:dynamic}. The full \methodname model's accuracy curve ultimately converges to a higher final accuracy than all other variants. Concurrently, its response length curve shows the most pronounced reduction, indicating that the token-wise budget allocation pushes the model to find more efficient reasoning paths. This outcome validates that the dynamic KL design successfully allocates the thinking budget, guiding the policy toward greater reasoning efficiency and higher accuracy.

Finally, combining the hierarchical reward with the dynamic KL mechanism results in the full \methodname framework. As shown in Table~\ref{tab:ablation_kl_entropy_corrected}, this complete model achieves the highest average accuracy of all configurations (55.7). This result is corroborated by Figure~\ref{fig:dynamic}, where we see the full \methodname model not only converges to the highest final accuracy but also achieves the most significant reduction in response length. This demonstrates the synergistic effect of our two components: by simultaneously regulating reasoning depth and efficiently allocating the exploration budget, \methodname achieves both stronger accuracy and more efficient training.

\begin{table}[t!]
  \centering
  \small
  \setlength{\tabcolsep}{1pt}
  \sisetup{
    round-mode=places,
    round-precision=2,
    detect-weight,
    mode=text,
    parse-numbers=false
  }
  \renewcommand{\arraystretch}{1.0}
  \caption{ \textbf{Accuracy and response length comparison across multimodal and textual benchmarks.}
We report both accuracy (Acc) and average response length (Len) for three model variants (ARES-CS-Vanilla, ARES-CS-7B, and ARES-RL-7B) on six benchmarks. The ARES-CS-Vanilla variant is trained using uniform sampling across all pass rates, disregarding the pass rate-to-length sampling strategy detailed in Section~\ref{sec:dataset}. Visualization of these results is provided in Figure~\ref{fig:acc-benchmark} (accuracy) and Figure~\ref{fig:resp-len-benchmark} (response length).}
  \label{tab:mm_text_joint}
  \begin{adjustbox}{max width=\linewidth}
  \begin{tabular}{
    l
    c S[table-format=5.2]
    c S[table-format=5.2]
    c S[table-format=5.2]
    c S[table-format=5.2]
    c S[table-format=5.2]
    c S[table-format=5.2]
  }
    \toprule
    \rowcolor{white}
    & \multicolumn{6}{c}{\textbf{Multimodal}} & \multicolumn{6}{c}{\textbf{Textual}} \\
    \cmidrule(lr){2-7} \cmidrule(lr){8-13}
    \textbf{Model} &
    \multicolumn{2}{c}{\textbf{OlympiadBench}} &
    \multicolumn{2}{c}{\textbf{MathVerse-V}} &
    \multicolumn{2}{c}{\textbf{MathVista}} &
    \multicolumn{2}{c}{\textbf{AIME25}} &
    \multicolumn{2}{c}{\textbf{MATH500}} &
    \multicolumn{2}{c}{\textbf{GSM8K}} \\
    \cmidrule(lr){2-3}\cmidrule(lr){4-5}\cmidrule(lr){6-7}
    \cmidrule(lr){8-9}\cmidrule(lr){10-11}\cmidrule(lr){12-13}
    & {\textbf{Acc}} & {\textbf{Len}} &
      {\textbf{Acc}} & {\textbf{Len}} &
      {\textbf{Acc}} & {\textbf{Len}} &
      {\textbf{Acc}} & {\textbf{Len}} &
      {\textbf{Acc}} & {\textbf{Len}} &
      {\textbf{Acc}} & {\textbf{Len}} \\
    \midrule
    ARES-CS-Vanilla & \underline{54.2} & 9123.2  & 50.5 & 3432.1 & 72.1 & 1853.3 & \underline{55.2} & 16361.6 & \underline{91.4} & 2750.1 & \underline{92.7} & 358.8 \\
    ARES-CS-7B      & 53.4 & 10281.5 & \underline{52.8} & 3575.4 & \underline{72.3} & 1712.3 & 55.0 & 17895.0 & 93.4 & 3011.8 & 92.3 & 332.3 \\
    ARES-RL-7B      & \textbf{56.9} & 12893.4 & \textbf{56.5} & 3198.3 & \textbf{74.6} & 1494.5 & \textbf{61.7} & 22618.8 & \textbf{95.2} & 2257.7 & \textbf{95.7} & 278.9 \\
    \bottomrule
  \end{tabular}
  \end{adjustbox}
\end{table}

\begin{figure}[t]
  \centering
  \includegraphics[width=1.0\linewidth]{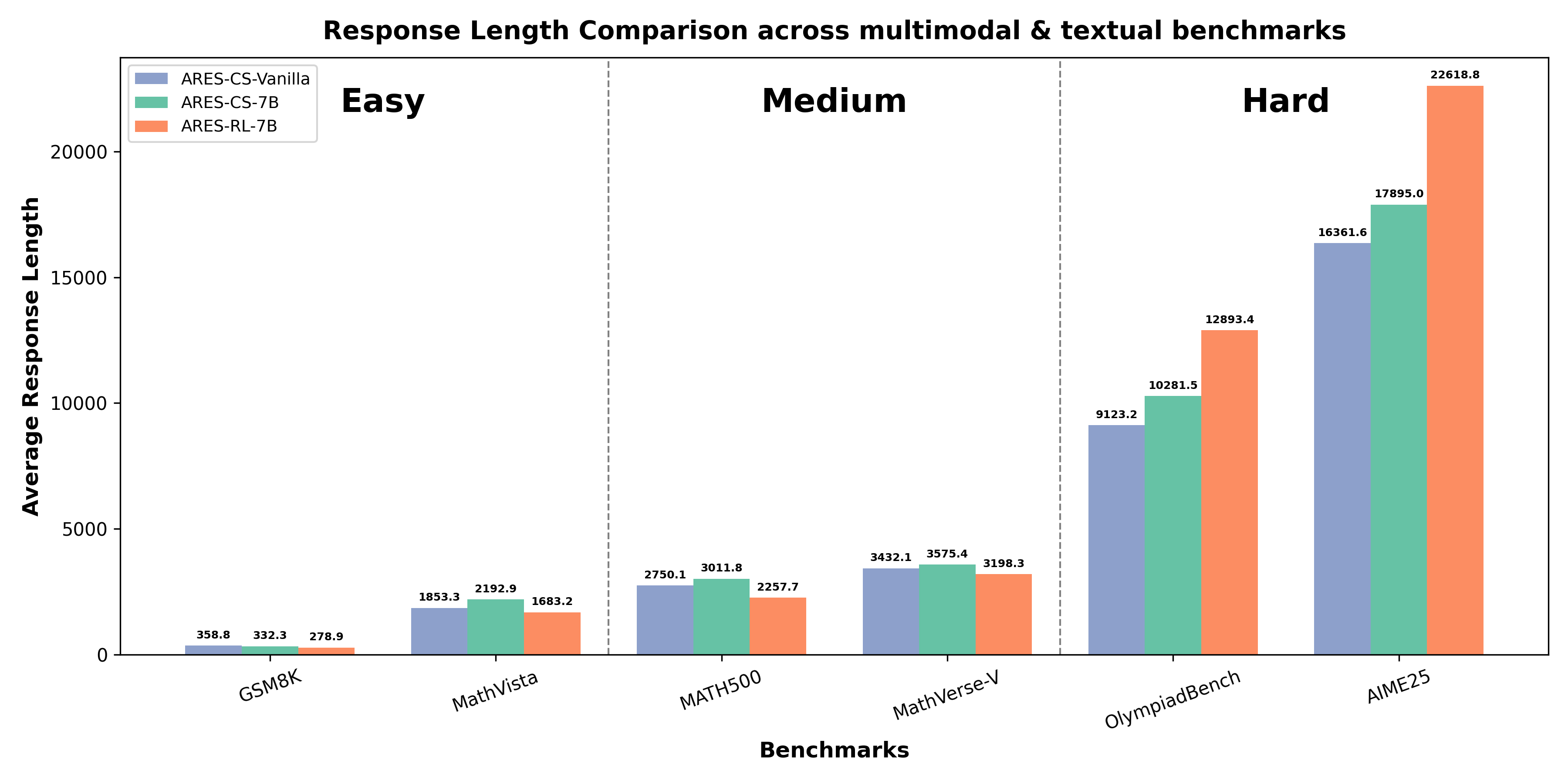}
  \caption{\textbf{Response length comparison across multimodal and textual benchmarks.}
  We report the average number of generated tokens for three model variants
  (ARES-CS-Vanilla, ARES-CS-7B, and ARES-RL-7B). RL training consistently
  reduces response length on most benchmarks, indicating improved reasoning
  efficiency. In contrast, response length increases on the most challenging
  datasets---AIME25 (textual) and OlympiadBench (multimodal)---highlighting the
  adaptive behavior of our RL approach: trimming unnecessary reasoning on easy
  problems while encouraging deeper exploration on difficult ones.}
  \label{fig:resp-len-benchmark}
\end{figure}

\begin{table}[t!]
\centering
\small
\renewcommand{\arraystretch}{0.8} 
\caption{ \textbf{Ablation study of Dynamic KL Loss and Entropy Reward.}
Building upon our Cold Start stage. Best results per column are \textbf{bold}
and second-best are \underline{underlined}.}
\label{tab:ablation_kl_entropy_corrected}
\resizebox{\linewidth}{!}{%
\begin{tabular}{lccccc}
\toprule
\textbf{Ablation Setting} & \textbf{MathVerse-V} & \textbf{MathVision} & \textbf{MathVista} & \textbf{DynaMath-W} & \textbf{Avg} \\
\midrule
Cold start only & 52.8 & 49.7 & 72.9 & 27.4 & 50.7 \\
Cold Start + GRPO (baseline) & 53.7 & 50.4 & 73.3 & 36.1 & 53.4 \\
Cold Start + Dynamic KL & 55.2 {\color{deeppink}(+2.4)} & \underline{51.7} {\color{deeppink}(+2.0)} & 73.7 {\color{deeppink}(+0.8)} & 38.3 {\color{deeppink}(+10.9)} & 54.7 {\color{deeppink}(+4.0)} \\
Cold Start + Entropy Shaping & \underline{55.9} {\color{deeppink}(+3.1)} & 51.3 {\color{deeppink}(+1.6)} & \underline{74.4} {\color{deeppink}(+1.5)} & \underline{39.3} {\color{deeppink}(+11.9)} & \underline{55.2} {\color{deeppink}(+4.5)} \\
ARES-7B & \textbf{56.5} {\color{deeppink}(+3.7)} & \textbf{51.9} {\color{deeppink}(+2.2)} & \textbf{74.6} {\color{deeppink}(+1.7)} & \textbf{39.7} {\color{deeppink}(+12.3)} & \textbf{55.7} {\color{deeppink}(+5.0)} \\
\bottomrule
\end{tabular}%
}
\end{table}

\section{Related Work}
\subsection{Multimodal Large Reasoning Models}
Reasoning is widely recognized as a foundational element of intelligent behavior~\citep{de_Winter_2024,xu2025largereasoningmodelssurvey,10.1609/aaai.v39i23.34590, bi2025reasoningmatterssurveyadvancements}.
To meet the demands of realistic multimodal environments, Multimodal Large Reasoning Models (MLRMs) fuse information from heterogeneous modalities to support deeper thinking. Extending chain-of-thought (CoT) fine-tuning to instill stepwise reasoning, works such as LLaVA-CoT~\citep{xu2025llavacotletvisionlanguage} and LlamaV-o1~\citep{thawakar2025llamavo1rethinkingstepbystepvisual} transpose this paradigm to the multimodal setting. Following the success of DeepSeek-R1~\citep{guo-deepseekr1-2025-arxiv}, a growing body of research leverages reinforcement learning (RL) to extend long-horizon reasoning in MLRMs: MM-EUREKA~\citep{meng2025mmeurekaexploringfrontiersmultimodal}, R1-V~\citep{chen2025r1v}, and LMM-R1~\citep{peng2025lmmr1empowering3blmms} apply RL to verifiable mathematical geometry and report gains in depth, coherence, and domain robustness via exploration-driven optimization of multi-step solution trajectories. Complementary efforts, including VLM-R1~\citep{shen2025vlmr1stablegeneralizabler1style}, Visual-RFT~\citep{liu2025visualrftvisualreinforcementfinetuning}, and Seg-Zero~\citep{liu2025segzeroreasoningchainguidedsegmentation}, likewise employ RL to strengthen core visual competencies in MLRMs (e.g., grounding, detection, and understanding). Against this backdrop, our goal is to build an adaptive MLRM that can decide how much reasoning effort to allocate at inference time.

\subsection{Adaptive Reasoning}
Current large reasoning models such as OpenAI o1~\citep{openai2024openaio1card} and DeepSeek R1~\citep{guo-deepseekr1-2025-arxiv} show notable reasoning capabilities while
often generate excessively lengthy reasoning for trivial questions while providing insufficient exploration for more challenging ones—essentially overthinking simple problems and underthinking complex ones~\citep{zhu2025conciseadaptivethinkinglarge, alomrani2025reasoningbudgetsurveyadaptive,yue2025dontoverthinkitsurvey,wang2025thoughtsplaceunderthinkingo1like, gao2024factteachingmllmsfaithful}.
To address this limitation, numerous studies have sought to establish adaptive reasoning mechanisms in text-based reasoning tasks to enhance efficiency. 
One line of research develops training-free approaches based on prompt engineering, such as explicitly imposing token budgets~\citep{han2025tokenbudgetawarellmreasoning} or employing budget-forcing mechanisms that regulate reasoning by either forcefully truncating or artificially extending the generation process (e.g., by appending repeated “Wait” tokens)~\citep{muennighoff2025s1simpletesttimescaling}. Instead of fixed budgets, other training-free methods adopt dynamic early-exiting strategies, terminating the reasoning process once the model demonstrates sufficient confidence in its answer~\citep{jiang2025flashthinkearlyexitmethod, yang2025dynamicearlyexitreasoning, yong2025thinknotexploringthinking}.
Complementary to these approaches, training-based methods aim to instill more strategic adaptive reasoning. Some studies curate variable-length datasets containing both concise and elaborate reasoning chains for supervised fine-tuning (SFT)~\citep{zhang2025othinkr1intrinsicfastslowthinking, yu2025longshortchainofthoughtmixturesupervised, ma2025cotvalvelengthcompressiblechainofthoughttuning}, while others leverage reinforcement learning (RL) to encourage dynamic control over reasoning depth. For example, certain works integrate length penalties into the RL framework~\citep{arora2025traininglanguagemodelsreason, ling2025fasteasydeephard, tu2025learningthinkshapingadaptive}, whereas others employ difficulty-awareness to promote succinct responses for simple questions and encourage more detailed reasoning for complex ones, thereby improving both efficiency and accuracy~\citep{huang2025adactrladaptivecontrollablereasoning, shen2025dastdifficultyadaptiveslowthinkinglarge, chen2025overthinkersdietcuttingtoken, xiang2025justthinkingefficientreasoning}.
However, much like reasoning models that lack efficiency penalties, these approaches often over-encourage exploration on hard problems, producing unnecessarily verbose traces with limited gains. And most work still focuses on text-only reasoning, adaptive reasoning in multimodal settings remains largely unexplored. In this work, we take an initial step toward a multimodal adaptive reasoning framework.

\subsection{Entropy in reinforcement learning} 
Rooted in information theory, entropy has long served to balance exploration and exploitation in Reinforcement Learning (RL), from early entropy bonuses and maximum-entropy formulations to widespread adoption in deep RL
~\citep{10.5555/1620270.1620297, 10.1145/1553374.1553508, mnih2015humanlevel,schulman2018equivalencepolicygradientssoft}. 
In the context of Large Language Models (LLMs), 
however, evidence for a global entropy bonus is mixed, and several reports find muted gains~\citep{klein2024beyond, shen2025entropycontrolllmrlalgorithms}, thereby prompting alternatives such as reward reshaping or clipping updates that over-collapse entropy~\citep{zhang2024entropyregularizedprocessrewardmodel, cui2025entropymechanismreinforcementlearning}.
In this paper, we treat entropy as a localized signal to encourage concise solutions for easy items and sustained exploration on hard ones, aligning “when” and “how much” to explore with task difficulty.

\section{Conclusion}

This paper introduces ARES, a framework designed to cultivate adaptive reasoning in MLRMs, addressing their tendency to overthink simple problems while under-exploring complex ones. ARES uses a two-stage training curriculum: an Adaptive Cold-Start phase instills difficulty awareness, followed by our novel Adaptive-Entropy Policy Optimization (AEPO). AEPO leverages high window-entropy to determine when to explore and a hierarchical reward to control how much reasoning depth is applied. Experiments confirm that ARES achieves superior performance and significantly improves reasoning efficiency, validating our adaptive, entropy-guided approach.

\bibliography{iclr2026_conference}
\bibliographystyle{iclr2026_conference}

\clearpage
\appendix

\section*{Appendix}
\addcontentsline{toc}{part}{Appendix}  

\startcontents[appendix]
\printcontents[appendix]{}{1}{\subsection*{Appendix Contents}}

\section{Implementation Details}
\label{ap: experiment setting}

Our models, based on Qwen2.5-VL-7B-Instruct and Qwen2.5-VL-3B-Instruct, are trained using a two-stage strategy. The first stage is a cold-start supervised fine-tuning (SFT), where we independently fine-tune the large language model (LLM) module for two epochs. This stage uses a batch size of 256, a sequence length of 32k, and a learning rate of $2 \times 10^{-5}$. Following SFT, we further optimize the model using Adaptive Entropy Policy Optimization (AEPO). In each AEPO iteration, we sample 512 prompts from the training set and generate eight rollouts per prompt, yielding 4,096 trajectories. Rollouts are generated with a temperature of 1.0 and a top-p of 0.99, and the maximum sequence length is set to 20k tokens (a 4k token prompt and a 16k token response). For policy updates, we use a global batch size of 128 and the AdamW optimizer with a learning rate of $1 \times 10^{-6}$ and a weight decay of $1 \times 10^{-2}$, without a learning rate warmup. To ensure training stability, we apply online filtering to discard samples whose mean overall reward falls outside the predefined range of [0.01, 0.99]. This process removes outliers that could distort advantage estimation and impede policy updates.

\section{Token-Level Entropy Measurement}
\label{appendix:token-entropy}

To quantify the model’s local uncertainty during generation, we compute the \emph{token-level entropy} at step $t$ as
\begin{equation}
H_t = - \sum_{j=1}^{V} p_{t,j} \log p_{t,j},
\qquad 
p_t = \pi_\theta(\cdot \mid q, o_{<t}),
\end{equation}
where $\pi_\theta$ denotes the policy under parameters $\theta$, $q$ is the input query, and $o_{<t}$ is the prefix of generated tokens up to position $t-1$. The probability vector $p_t \in \mathbb{R}^V$ is obtained from the normalized logits $z_t$ by a temperature-scaled softmax transformation:
\[
p_t = \mathrm{Softmax}\!\left(\tfrac{z_t}{T}\right),
\]
with vocabulary size $V$ and decoding temperature $T$ controlling the sharpness of the distribution.

\paragraph{Role in our setting.}
Unlike sequence-level measures (e.g., log-likelihood of the whole output), $H_t$ captures the fine-grained uncertainty at each generation step. During reinforcement learning, we treat $H_t$ as an intrinsic signal of “reasoning hesitation”: a higher entropy indicates that the model considers multiple plausible continuations, whereas a lower entropy suggests that the model is confident about the next token. 

Note that $H_t$ is tied to the distribution $p_t$, not to the realized token $o_t$ drawn from $p_t$. Hence two identical tokens generated at different timesteps may correspond to different entropies, depending on the local distributional context. In this sense, token entropy should be interpreted as a property of the model’s decision process rather than of the discrete outcome itself.

\section{RLVR Algorithms}
\label{app:rlvr_algorithms}

In this subsection, we provide additional details of the baseline RLVR algorithms used in our experiments, namely \emph{Group Relative Policy Optimization (GRPO)} and \emph{Dynamic sAmpling Policy Optimization (DAPO)}. Both methods extend traditional policy optimization frameworks to improve the stability and effectiveness of reinforcement learning for reasoning tasks, yet they differ in how they handle variance reduction, KL regularization, and sampling dynamics.

\subsection{Group Relative Policy Optimization (GRPO).}
GRPO~\citep{guo-deepseekr1-2025-arxiv} extends conventional policy optimization by introducing group-wise normalization. Specifically, training samples are divided into $K$ groups $\{\mathcal{G}_1, \ldots, \mathcal{G}_K\}$ based on input prompts or task-specific criteria. For each group $\mathcal{G}_i$, both a policy $\pi_\theta$ and a frozen reference policy $\pi_{\theta_{\mathrm{ref}}}$ are maintained. The GRPO objective can be written as:
\[
\mathbb{E}_{x\sim\mathcal{G}_i, \, y\sim \pi_\theta(y|x)} \Bigg[ \min\Bigg(
\frac{\pi_\theta(y|x)}{\pi_{\theta_{\mathrm{ref}}}(y|x)} \,\hat A(x,y),\;
\mathrm{clip}\!\Big(\frac{\pi_\theta(y|x)}{\pi_{\theta_{\mathrm{ref}}}(y|x)},\,1-\epsilon,\,1+\epsilon\Big)\,\hat A(x,y)
\Bigg)\Bigg],
\]
where $\epsilon$ controls the size of the trust region. The group-relative advantage $\hat A(x,y)$ is defined as:
\[
\hat A(x,y_i) \;=\; \frac{r(x,y_i) - \mathrm{mean}(\{r(x,y_1),\ldots,r(x,y_G)\})}{\mathrm{std}(\{r(x,y_1),\ldots,r(x,y_G)\})+\epsilon}.
\]
This relative advantage stabilizes training by normalizing each sample’s reward against the variance of its group. By leveraging intra-group normalization, GRPO encourages responses that are better than their peers, while maintaining diversity across groups.

\subsection{Dynamic sAmpling Policy Optimization (DAPO).}
Building on GRPO~\citep{guo-deepseekr1-2025-arxiv}, DAPO~\citep{yu2025dapoopensourcellmreinforcement} removes the explicit KL penalty and instead incorporates several mechanisms for improved sample efficiency. First, it introduces a \emph{clip-higher} strategy to reduce over-penalization of promising trajectories. Second, it adopts \emph{dynamic sampling}, where response sampling probabilities are adaptively adjusted based on entropy signals. Third, DAPO employs token-level policy gradient updates with overlong reward shaping. The DAPO objective is formulated as:
\[
\mathcal{J}_{\mathrm{DAPO}}(\theta) = \mathbb{E}_{(q,a)\sim\mathcal{D},\, \{o^i\}_{i=1}^G\sim\pi_{\theta_{\mathrm{old}}}(\cdot|q)}
\left[
\frac{1}{\sum_{i=1}^G |o^i|} \sum_{i=1}^G \sum_{t=1}^{|o^i|}
\min\!\Big(r^i_t(\theta),\,\mathrm{clip}(r^i_t(\theta), 1-\epsilon_{\mathrm{low}},1+\epsilon_{\mathrm{high}})\,\hat A^i_t \Big)
\right],
\]
where $r^i_t(\theta)$ is defined as in GRPO, and $\hat A^i_t$ is the token-level relative advantage. Unlike GRPO, which focuses on group-level normalization, DAPO shifts optimization to the token level, enabling finer-grained control and adaptive shaping of the learning signal. This makes DAPO a strong baseline for RLVR without requiring a value network.

Both GRPO and DAPO contribute to reducing variance and stabilizing policy optimization in RLVR. GRPO leverages group-relative normalization, while DAPO refines the approach with token-level updates and dynamic sampling. In our work, we adopt DAPO as the primary baseline due to its strong empirical performance and its widespread use in prior RLVR literature.

\section{Window-Entropy Aggregation}

While token-level entropy $H_t$ provides a fine-grained measure of uncertainty, single-token fluctuations are often dominated by lexical ambiguity or local randomness. To capture more coherent reasoning segments, we aggregate entropy over a sliding window of length $w$:
\begin{equation}
\bar{H}_{t:w} \;=\; \frac{1}{w} \sum_{\tau=t}^{t+w-1} H_\tau.
\end{equation}
This windowed statistic smooths out spurious token-level spikes and highlights regions where the model persistently maintains high uncertainty.

A window is flagged as \emph{high-entropy} if its average entropy exceeds a dynamic threshold $\theta$, which is updated online to match the distributional scale of the current batch, which is shown in Section~\ref{emp: window entropy}. Intuitively, these windows correspond to stretches of the output where the model is genuinely uncertain about its reasoning trajectory, as opposed to momentary ambiguity on function words or symbols.

The aggregated entropy $\bar{H}_{t:w}$ therefore serves two purposes: (i) it localizes “hard thinking” phases within the output, providing a natural signal for allocating additional exploration, and (ii) it stabilizes training by reducing sensitivity to token-level noise. In our AEPO algorithm, only tokens falling inside validated high-entropy windows are granted relaxed KL budgets, thereby ensuring that extra computation is spent precisely where reasoning effort is most needed.

\section{KL Penalty Inflates GRPO Advantage Variance compared to KL Loss}
\label{app:proof-kl-variance}

For a fixed prompt $x$, GRPO samples a group of $N$ responses $\{y_i\}_{i=1}^N$ i.i.d.\ from $\pi_\theta(\cdot\mid x)$.
Let the \emph{task$+$entropy shaped return} be
\begin{equation}
S_i \;\triangleq\; r_{\mathrm{acc}}(x,y_i)\;-\;\lambda_d\,\frac{C_{\mathrm{ent}}(x,y_i)}{Z_{\mathrm{ent}}},
\label{eq:app-S}
\end{equation}
and the sample-averaged per-token reference KL estimator be
\begin{equation}
K_i \;\triangleq\; \frac{1}{L}\sum_{t=1}^L \mathrm{kld}_{t,i}.
\label{eq:app-K}
\end{equation}
We compare two GRPO implementations:
\begin{enumerate}[leftmargin=16pt,itemsep=2pt,topsep=2pt]
\item \textbf{KL penalty} (merge KL into the return):
\[
R'_i \;=\; S_i \;-\; \kappa K_i,\qquad 
A'_i \;=\; R'_i \;-\; \bar R',\quad
\bar R'=\tfrac{1}{N}\sum_{j=1}^N R'_j .
\]
\item \textbf{KL loss (actor-only)}:
\[
R_i \;=\; S_i,\qquad 
A_i \;=\; R_i \;-\; \bar R,\quad 
\bar R=\tfrac{1}{N}\sum_{j=1}^N R_j,
\]
and add the separate actor regularizer $\kappa \cdot \frac{1}{L}\sum_t \beta_t\,\mathrm{kld}_{t,i}$ which does \emph{not} enter $A_i$.
\end{enumerate}
Assume $\{(S_i,K_i)\}_{i=1}^N$ are i.i.d.\ with finite second moments, and denote
\[
\sigma_S^2=\Var(S),\qquad \sigma_K^2=\Var(K),\qquad \Cov(S,K)=\rho\,\sigma_S\sigma_K.
\]

\begin{lemma}[Group-baseline advantage variance]
\label{lem:group-var}
For i.i.d.\ $R_1,\dots,R_N$ with finite variance, the GRPO group-baseline advantage
$A_i = R_i - \bar R$ satisfies
\begin{equation}
\Var(A_i) \;=\; \Big(1-\tfrac{1}{N}\Big)\,\Var(R).
\label{eq:app-lemma}
\end{equation}
\end{lemma}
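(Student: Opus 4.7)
The plan is to compute $\Var(A_i)$ by directly expanding the variance of the linear combination $A_i = R_i - \bar R$ and exploiting the i.i.d.\ assumption. The cleanest route is to use the bilinearity of covariance together with the two identities $\Var(\bar R) = \sigma^2/N$ and $\Cov(R_i, \bar R) = \sigma^2/N$, both of which follow immediately from $\Cov(R_i, R_j) = \sigma^2 \mathbf{1}[i=j]$.

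First I would write $\Var(A_i) = \Var(R_i) - 2\Cov(R_i, \bar R) + \Var(\bar R)$. Second, I would compute $\Var(\bar R) = N^{-2}\sum_{j} \Var(R_j) = \sigma^2/N$ using independence. Third, I would compute $\Cov(R_i, \bar R) = N^{-1}\sum_{j}\Cov(R_i, R_j) = \sigma^2/N$, since only the $j=i$ term survives. Substituting gives $\Var(A_i) = \sigma^2 - 2\sigma^2/N + \sigma^2/N = (1 - 1/N)\sigma^2$, which is the claim.

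As a sanity check I would also sketch the alternative decomposition $A_i = (1 - 1/N) R_i - (1/N)\sum_{j\neq i} R_j$: here the two summands are independent by i.i.d.\ of the $R_j$'s, so $\Var(A_i) = (1 - 1/N)^2 \sigma^2 + (N-1)/N^2 \cdot \sigma^2$, which telescopes to the same $(N-1)\sigma^2/N$. This double derivation costs nothing and confirms the answer.

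Honestly, there is no real obstacle; the statement is a standard leave-one-out variance identity and the entire argument is second-moment bookkeeping under the i.i.d.\ hypothesis. The only thing worth flagging is that the result depends \emph{only} on independence and finite second moments of the $R_j$'s, not on any structural property of the shaped return $S_i$ in~\eqref{eq:app-S}; this is what makes the lemma a ready-made tool for subsequently comparing $\Var(A_i)$ with $\Var(A'_i)$ when $R'_i = S_i - \kappa K_i$, since one simply applies the same identity with $\Var(R)$ replaced by $\sigma_S^2 - 2\kappa \rho \sigma_S \sigma_K + \kappa^2 \sigma_K^2$.
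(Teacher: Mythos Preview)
Your proposal is correct, and your sanity-check decomposition $A_i = (1-\tfrac{1}{N})R_i - \tfrac{1}{N}\sum_{j\neq i} R_j$ is exactly the paper's one-line proof: it writes $A_i$ in that form, expands $\Var(A_i)$ using independence so that cross-terms vanish, and reads off $(1-\tfrac{1}{N})\Var(R)$. Your primary covariance-expansion route is an equally standard and essentially equivalent variant.
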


\begin{proof}
Write $A_i=(1-\tfrac{1}{N})R_i - \tfrac{1}{N}\sum_{j\neq i}R_j$ and use i.i.d.\ plus independence across indices to expand $\Var(A_i)$; cross-terms cancel and \eqref{eq:app-lemma} follows.
\end{proof}

\begin{proposition}[KL penalty inflates GRPO advantage variance]
\label{prop:inflation}
Under the assumptions above,
\begin{equation}
\Var(A'_i) - \Var(A_i)
\;=\;
\Big(1-\tfrac{1}{N}\Big)\,\Big(\kappa^2 \sigma_K^2 \;-\; 2\kappa\,\Cov(S,K)\Big).
\label{eq:app-diff}
\end{equation}
In particular, if $\Cov(S,K)\approx 0$ (empirically common), then
$\Var(A'_i) - \Var(A_i) \approx \big(1-\tfrac{1}{N}\big)\kappa^2 \sigma_K^2 > 0$ for any $\kappa\neq 0$.
More generally, unless $\kappa \in \big(0,\,2\,\Cov(S,K)/\sigma_K^2\big)$ (a narrow interval when $\Cov(S,K)$ is small), one has $\Var(A'_i) > \Var(A_i)$.
\end{proposition}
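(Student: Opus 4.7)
The plan is to reduce the proposition to a direct application of Lemma~\ref{lem:group-var} combined with the standard bilinearity of variance. Since the only difference between the two GRPO variants lies in whether the per-sample KL estimator $K_i$ is absorbed into the shaped return, I expect the computation to reduce to expanding $\Var(S-\kappa K)$ and comparing with $\Var(S)$.

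First, I would note that $A_i = R_i - \bar R$ and $A'_i = R'_i - \bar R'$ are both group-baselined advantages formed from i.i.d.\ samples with finite second moments. By Lemma~\ref{lem:group-var} applied to the two populations $\{R_i\}$ and $\{R'_i\}$ respectively, we obtain
\begin{equation*}
\Var(A_i) \;=\; \Big(1-\tfrac{1}{N}\Big)\Var(S),
\qquad
\Var(A'_i) \;=\; \Big(1-\tfrac{1}{N}\Big)\Var(S-\kappa K).
\end{equation*}
Next, I would use bilinearity of variance together with the identity $\Cov(S,K)=\rho\sigma_S\sigma_K$ to write
\begin{equation*}
\Var(S-\kappa K) \;=\; \Var(S) \;+\; \kappa^2\Var(K) \;-\; 2\kappa\,\Cov(S,K).
\end{equation*}
Subtracting $\Var(A_i)$ from $\Var(A'_i)$ and substituting $\sigma_K^2=\Var(K)$ yields \eqref{eq:app-diff} directly.

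For the second part of the statement, I would analyze the quadratic $\phi(\kappa) \triangleq \kappa^2\sigma_K^2 - 2\kappa\,\Cov(S,K)$ as a function of $\kappa$. This is a convex parabola in $\kappa$ with roots at $\kappa=0$ and $\kappa = 2\,\Cov(S,K)/\sigma_K^2$, so $\phi(\kappa) > 0$ outside the open interval between these roots. When $\Cov(S,K)=0$, the interval collapses and $\phi(\kappa) = \kappa^2\sigma_K^2 > 0$ for every $\kappa \neq 0$; when $\Cov(S,K)$ is small (as is common empirically), the interval is narrow, so strict inflation $\Var(A'_i) > \Var(A_i)$ holds for essentially all practical penalty strengths.

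The main obstacle is conceptual rather than technical: one must be careful that the actor-only KL loss in the second variant truly does not enter the advantage, so that $\Var(A_i)$ depends only on $S$, while the penalty variant couples $K$ into the return and hence into the group baseline $\bar R'$. Once this accounting is made explicit, the remaining steps are elementary variance calculations and the inflation conclusion follows immediately from convexity of $\phi$.
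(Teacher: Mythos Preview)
Your proposal is correct and follows essentially the same approach as the paper: apply Lemma~\ref{lem:group-var} to reduce the comparison of $\Var(A'_i)$ and $\Var(A_i)$ to a comparison of $\Var(R'_i)=\Var(S-\kappa K)$ and $\Var(R_i)=\Var(S)$, then expand the former by bilinearity and subtract. Your explicit analysis of the quadratic $\phi(\kappa)$ for the second part is slightly more detailed than the paper's proof, which simply states the conclusions without spelling out the root structure.
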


\begin{proof}
By Lemma~\ref{lem:group-var}, it suffices to compare $\Var(R'_i)$ to $\Var(R_i)$.
We have $\Var(R_i)=\Var(S)=\sigma_S^2$ and
\[
\Var(R'_i) \;=\; \Var(S_i - \kappa K_i) \;=\; \sigma_S^2 + \kappa^2\sigma_K^2 - 2\kappa\,\Cov(S,K).
\]
Subtracting and multiplying by $(1-\tfrac{1}{N})$ yields \eqref{eq:app-diff}.
\end{proof}

\paragraph{Consequences for gradient variance.}
Let the score-function term be $G_i=\sum_{t=1}^L \nabla_\theta \log \pi_\theta(y_{i,t}\mid s_{i,t})$ and suppose, as is standard in variance analyses, that $A$ and $G$ are weakly correlated.\footnote{This assumption is used for \emph{comparative} purposes; the conclusion also holds under more general covariance decompositions.}
Then
\[
\Var\big(A'_i G_i\big) \;\approx\; \Var(A'_i)\,\E\|G_i\|^2
\;>\; \Var(A_i)\,\E\|G_i\|^2 \;\approx\; \Var\big(A_i G_i\big),
\]
whenever $\Var(A'_i)>\Var(A_i)$ by Proposition~\ref{prop:inflation}.
In the KL-loss implementation, the additional KL gradient
$\kappa \sum_t \beta_t \nabla_\theta \mathrm{kld}_{t,i}$
is \emph{not} multiplied by the group advantage and therefore does not inherit its variance amplification; it also targets the trust-region deviation at a per-token granularity, improving credit assignment.

Merging per-token KL into the return injects a high-frequency, task-agnostic signal ($K_i$) into the groupwise competition, thereby amplifying the variance of the advantage and the policy-gradient estimator. Using KL as an actor-only regularizer decouples the trust-region control from the task/entropy signals, keeping the advantage low-variance and allowing a dedicated controller to track the KL budget.

\section{The Algorithm workflow of AEPO}

In this section, we provide a detailed flowchart of our AEPO algorithm in the following diagram \ref{alg:aepo}.

\begin{algorithm}[t]
\small
\DontPrintSemicolon
\SetAlgoLined
\caption{Adaptive-Entropy Policy Optimization (AEPO)}
\label{alg:aepo}
\SetKwInOut{Input}{Input}\SetKwInOut{Output}{Output}
\Input{
Reference model $\pi_{\mathrm{ref}}$; initial policy $\pi_\theta$; prompts $\mathcal{D}$; \\
group size $G$; window size $w$; rollout parameters (temperature, top-$p$); \\
entropy quantile $q=0.95$; relaxation factor $\rho<1$; \\
difficulty buckets $\{\texttt{easy}, \texttt{medium}, \texttt{hard}\}$ with KL targets $\{\delta_d\}$, \\
entropy weights $\{\lambda_d\}$, base KL weights $\{\beta_d\}$, and shaping functions $S_d(\cdot)$; \\
GRPO clipping parameter $\epsilon$; controller step size $\alpha_\kappa$.
}
\Output{Updated policy $\pi_\theta$}
\BlankLine

\For{each training iteration}{
  Sample a mini-batch of prompts $\{x_b\}_{b=1}^{B} \sim \mathcal{D}$\;
  \ForEach{prompt $x$}{
    \tcp{(1) Rollout generation and entropy computation}
    Generate $G$ rollouts $\{o_i\}_{i=1}^G \sim \pi_\theta(\cdot \mid x)$\;
    Compute token entropies $H_{i,t}$ and sliding-window means $\bar H_{i,t:w}$\;

    \tcp{(2) Dynamic thresholding}
    $\theta_i \gets \mathrm{Quantile}_{q}(\{H_{i,t}\}_t)$ for each $i$; \;
    $\theta \gets \frac{1}{G}\sum_{i=1}^{G}\theta_i$ 
    $m_{i,t} \gets \mathbb{I}\{\bar H_{i,t:w} \ge \theta\}$; \quad
    $N^{(i)}_{\mathrm{HE}} \gets \sum_t m_{i,t}$\;

    \tcp{(3) Difficulty estimation}
    Compute pass@$G$ accuracy $p_x$; assign difficulty bucket $d(x)$ based on $p_x$\;

    \tcp{(4) Sequence rewards with entropy shaping}
    $R_i \gets r_{\mathrm{acc}}(x,o_i) + S_{d(x)}(N^{(i)}_{\mathrm{HE}}, \mathrm{acc}_i) - \lambda_{d(x)}\,\frac{C^{(i)}_{\mathrm{ent}}}{Z_i}$\;

    \tcp{(5) Group-centered advantages with entropy bonus}
    $\widehat R_i \gets R_i - \tfrac{1}{G}\sum_{j=1}^G R_j$\;
    \For{$t=1$ \KwTo $L_i$}{
      $A^{\mathrm{grp}}_{i,t} \gets \widehat R_i / L_i$\;
      $\psi_{i,t} \gets \lambda_{d(x)}\,\phi([\bar H_{i,t:w}-\theta]_+)\,m_{i,t} - b_i$\;
      $\tilde A_{i,t} \gets A^{\mathrm{grp}}_{i,t} + \psi_{i,t}$\;
    }

    \tcp{(6) KL regularization with window-adaptive weights}
    $\mathrm{kld}_{i,t} \gets D_{\mathrm{KL}}(\pi_\theta(\cdot\mid s_{i,t}) \| \pi_{\mathrm{ref}}(\cdot\mid s_{i,t}))$\;
    $\beta_{i,t} \gets \beta_{d(x)} \cdot \rho^{m_{i,t}}$\;

    \tcp{(7) Actor update with clipped GRPO objective}
    $r_{i,t}(\theta) \gets \pi_\theta(o_{i,t}\mid s_{i,t}) / \pi_{\theta_{\mathrm{old}}}(o_{i,t}\mid s_{i,t})$\;
    $\mathcal{L}_{\mathrm{actor}} \gets -\frac{1}{G}\sum_i \frac{1}{L_i}\sum_t 
      \min\{ r_{i,t}(\theta)\tilde A_{i,t}, \mathrm{clip}(r_{i,t}(\theta),1-\epsilon,1+\epsilon)\tilde A_{i,t} \}$\;
    $\mathcal{L}_{\mathrm{KL}} \gets \kappa_{d(x)} \cdot \frac{1}{G}\sum_i \frac{1}{L_i}\sum_t \beta_{i,t}\,\mathrm{kld}_{i,t}$\;
    $\mathcal{L} \gets \mathcal{L}_{\mathrm{actor}} + \mathcal{L}_{\mathrm{KL}}$\;
    Update $\theta \gets \theta - \eta \nabla_\theta \mathcal{L}$\;

    \tcp{(8) KL controller update (non-window tokens only)}
    $\mathrm{KL}_{\mathrm{ctrl}} \gets \frac{1}{G}\sum_i \frac{1}{L_i}\sum_t \mathbb{I}\{m_{i,t}=0\}\,\mathrm{kld}_{i,t}$\;
    $\kappa_{d(x)} \gets \mathrm{clip}\!\Big(\kappa_{d(x)}\big(1+\alpha_\kappa(\mathrm{KL}_{\mathrm{ctrl}}/\delta_{d(x)}-1)\big),\ \kappa_{\min}, \kappa_{\max}\Big)$\;
  }
}
\end{algorithm}

\section{On-policy difficulty and buckets}
\label{sec:prelim:difficulty}
Instance difficulty is estimated online using group rollouts.
With $K{=}8$ samples per prompt,
\begin{equation}
\mathrm{pass@}8(x)=\frac{1}{8}\sum_{k=1}^{8}\mathbf{1}\{\text{correct}(y^{(k)},x)\},
\label{eq:passk}
\end{equation}
and we assign buckets
\begin{equation}
d(x)=
\begin{cases}
\texttt{easy},   & \mathrm{pass@}8(x)\ge 6,\\[2pt]
\texttt{medium}, & 3\le \mathrm{pass@}8(x)<6,\\[2pt]
\texttt{hard},   & \mathrm{pass@}8(x)\le 2.
\end{cases}
\label{eq:buckets}
\end{equation}
Buckets are recomputed each iteration and will parameterize per-bucket budgets/targets in our method.

\section{High-Entropy Tokens}
\label{appendix:high-entropy}

In our analysis, we define the number of \emph{high-entropy tokens} in a
trajectory by combining an entropy-based thresholding procedure with a
semantics-aware filtering step.

\paragraph{Semantic filtering.}
Rather than counting all tokens that exceed a certain entropy threshold,
we first restrict attention to a fixed vocabulary of \emph{semantically
meaningful reasoning triggers}. This vocabulary is constructed using a
filtering prompt applied to GPT, which classifies each candidate token
as either ``reasoning-relevant'' (True) or not (False). The retained tokens
include transition markers (e.g., \emph{but}, \emph{however}),
reasoning initiators (e.g., \emph{therefore}, \emph{thus}), and other
structural markers that signal the onset of a reasoning step. Tokens
with little semantic content (e.g., \emph{A}, \emph{B}, digits, or
domain-specific placeholders) are excluded. The entire resulting
vocabulary can be visualized in Figure~\ref{fig:high_entropy_token}(b),
where the word cloud highlights the most frequent reasoning triggers.

\paragraph{Entropy thresholding.}
Given token-level entropy values $\{H_t\}_{t=1}^T$, we compute the
$95$th percentile of entropies within the current batch, denoted
$\theta_{0.95}$. A token $o_t$ is labeled as \emph{high-entropy} if
\[
H_t \;\geq\; \theta_{0.95}
\quad\text{and}\quad
o_t \in \mathcal{V}_{\text{sem}},
\]
where $\mathcal{V}_{\text{sem}}$ is the fixed set of semantically
filtered tokens.

The number of high-entropy tokens in a trajectory $y=(o_1,\dots,o_T)$ is
thus defined as
\[
N_{\text{HE}}(y)
=\sum_{t=1}^T \mathbf{1}\!\left[ H_t \geq \theta_{0.95}
\;\wedge\; o_t \in \mathcal{V}_{\text{sem}} \right].
\]

This definition ensures that our entropy-based reward focuses only on
tokens that both exhibit unusually high uncertainty and carry clear
semantic signals of reasoning transitions. As a result, the entropy
reward mechanism becomes more interpretable and more closely aligned
with reasoning-relevant exploratory behavior.

\section{Visual Analysis of Entropy Reward Design}
\label{reward design}

\begin{figure}[t]
  \centering
  \includegraphics[width=1.0\linewidth]{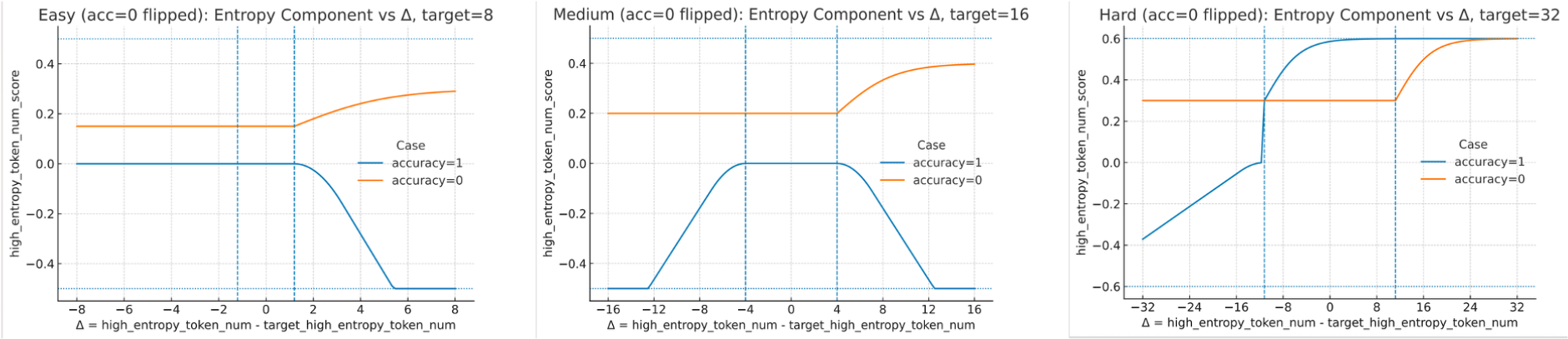}
  \caption{\textbf{Effect of entropy shaping and KL regularization on accuracy reward.}
  We plot the moving-average accuracy reward over training steps under different
  ablation settings. Baseline methods like GRPO and DAPO either lack stable
  improvement or plateau early. In contrast, our ARES variants consistently achieve
  higher accuracy rewards throughout training. Notably, combining both KL
  regularization and entropy shaping yields the most stable and significant gains,
  demonstrating the necessity of the two components working together.}
  \label{fig:entropy_reward_curves}
\end{figure}

To better understand the effect of our entropy reward shaping, we provide a
visual analysis in Figure~\ref{fig:entropy_reward_curves}. Each plot illustrates
the entropy reward component as a function of the deviation
$\Delta = N_{\text{HE}} - N^{\text{target}}_{\text{HE}}$, where
$N_{\text{HE}}$ is the number of detected high-entropy tokens
and $N^{\text{target}}_{\text{HE}}$ is the difficulty-dependent target.

\paragraph{Easy tasks.}
For easy problems, excessive high-entropy activity indicates unnecessary
overthinking. As shown in the left panel, the entropy reward heavily penalizes
positive deviations ($\Delta > 0$), while providing only mild encouragement when
errors occur ($\text{acc}=0$). This ensures that correct responses remain short
and efficient, while still allowing a small degree of exploratory thinking if
the model initially fails.

\paragraph{Medium tasks.}
For medium-difficulty problems, the design is symmetric around the target.
Moderate deviations from the target are tolerated, but excessive deviations in
either direction are penalized. Importantly, when the model answers incorrectly,
the entropy reward encourages it to increase reasoning length (i.e., generate
more high-entropy tokens). This balances efficiency with robustness, preventing
both under- and over-thinking.

\paragraph{Hard tasks.}
For hard problems, high entropy is consistently rewarded regardless of accuracy.
As shown in the right panel, both correct and incorrect responses receive
positive shaping when $\Delta > 0$. This reflects the intuition that difficult
problems require longer and more exploratory reasoning chains, and the model
should be encouraged to sustain high-entropy reasoning segments.

Overall, the entropy reward curves realize a difficulty-aware shaping strategy:
they suppress unnecessary verbosity on easy problems, regulate reasoning depth
for medium ones, and strongly encourage exploratory thinking on hard problems.
This design provides a smooth and interpretable mechanism for adaptive reasoning
across the full difficulty spectrum.

\section{Why High–Entropy Tokens Predict Reasoning Response Length}
\label{subsec:entropy-length}

Given input $x$, the policy $\pi_\theta$ generates a response $o=(o_1,\dots,o_L)$ with random stopping time $\tau=L$.
Let $h_t=(x,o_{1:t-1})$ be the history at step $t$ and define the (possibly top-$k$ normalized) token entropy
\begin{equation}
H_t \;=\; -\sum_{v} p_\theta(v\mid h_t)\,\log p_\theta(v\mid h_t).
\end{equation}
For a window size $w$ and threshold $\theta$, define window entropy and a high–entropy indicator
\begin{equation}
\bar H_{t:w} \;=\; \frac{1}{w}\sum_{\tau=t}^{t+w-1} H_\tau,
\qquad
\mathbb{I}^{\mathrm{HE}}_t \;=\; \mathbb{I}\{\bar H_{t:w}\ge \theta\},
\end{equation}
and let the (validated) high–entropy count be
\begin{equation}
N_{\mathrm{HE}} \;=\; \sum_{t=1}^{L} \mathbb{I}^{\mathrm{HE}}_t.
\end{equation}
Our goal is to relate $L$ to $N_{\mathrm{HE}}$ (or the total high–entropy residence time).

\paragraph{A two–state latent process.}
Assume an interpretable latent state $S_t\in\{\mathsf{R},\mathsf{V}\}$ with:
(i) \emph{Reasoning} ($\mathsf{R}$): exploratory/high–entropy;
(ii) \emph{Verbatim} ($\mathsf{V}$): declarative/low–entropy.
There exist constants $H_{\mathsf{R}}>H_{\mathsf{V}}$ and a threshold $\theta\in(H_{\mathsf{V}},H_{\mathsf{R}})$ such that the detector has bounded error
\begin{equation}
\Pr(\mathbb{I}^{\mathrm{HE}}_t=1\mid S_t=\mathsf{R})\ge 1-\alpha,
\qquad
\Pr(\mathbb{I}^{\mathrm{HE}}_t=0\mid S_t=\mathsf{V})\ge 1-\beta,
\label{eq:detector}
\end{equation}
with $\alpha,\beta \in [0,1)$. Let the $\mathsf{R}\!\to\!\mathsf{V}$ transition probability be $q\in(0,1]$, and let the answer–emitting hazard in $\mathsf{V}$ be $h\in(0,1]$ (stopping can only occur in $\mathsf{V}$).

\paragraph{Theorem 1 (Linear relation with reasoning residence).}
Let $T_{\mathsf{R}}=\sum_{t=1}^{\tau}\mathbb{I}\{S_t=\mathsf{R}\}$ be the total time spent in the reasoning state. Then there exist $a,b>0$ such that
\begin{equation}
\mathbb{E}[L] \;=\; a \;+\; b\,\mathbb{E}[T_{\mathsf{R}}].
\label{eq:lin-TR}
\end{equation}
Moreover, under \eqref{eq:detector} there exist constants $c_1,c_2>0$ for which
\begin{equation}
c_1\,\mathbb{E}[N_{\mathrm{HE}}] \;\le\; \mathbb{E}[T_{\mathsf{R}}] \;\le\; c_2\,\mathbb{E}[N_{\mathrm{HE}}],
\end{equation}
hence
\begin{equation}
\boxed{\ \mathbb{E}[L] \;=\; a' \;+\; b'\,\mathbb{E}[N_{\mathrm{HE}}]\ }\quad\text{for some } a',b'>0.
\label{eq:lin-NHE}
\end{equation}

\emph{Proof sketch.}
Within $\mathsf{R}$, the process cannot stop; each reasoning excursion has geometric length with mean $1/q$.
Within $\mathsf{V}$, the segment ends with geometric stopping hazard $h$ (mean $1/h$).
The total length is a renewal sum of i.i.d.\ excursions plus a terminal $\mathsf{V}$ segment, yielding \eqref{eq:lin-TR}.
Bounded detector errors map $T_{\mathsf{R}}$ to $N_{\mathrm{HE}}$, proving \eqref{eq:lin-NHE}.
\hfill$\square$

\paragraph{A stopping–time view via entropy–dependent hazard.}
Assume the probability of emitting the final answer at step $t$ is a non–increasing function of entropy,
\begin{equation}
\Pr(\text{stop at }t \mid h_t) \;=\; \lambda(H_t), \qquad \lambda'(H)\le 0.
\label{eq:hazard}
\end{equation}
Then larger entropy sequences stochastically dominate the stopping time.

\paragraph{Theorem 2 (High entropy delays stopping).}
For two trajectories with entropy paths $\{H^{(1)}_t\}$ and $\{H^{(2)}_t\}$ such that $H^{(1)}_t \le H^{(2)}_t$ for all $t$, one has $\mathbb{E}[L^{(1)}]\le \mathbb{E}[L^{(2)}]$.
Moreover, for any threshold $\theta$,
\begin{equation}
\mathbb{E}[L] \;\ge\; \mathbb{E}\!\left[\sum_{t=1}^{\tau}\frac{\mathbb{I}\{H_t\ge \theta\}}{\lambda(\theta)}\right]
\;\ge\; \frac{1}{\lambda(\theta)}\,\mathbb{E}[N_{\mathrm{HE}}].
\label{eq:hazard-lb}
\end{equation}

\emph{Proof sketch.}
By stochastic ordering under \eqref{eq:hazard}, higher entropy lowers the instantaneous hazard and yields larger stopping times in the convex order, implying the expectation inequality.
Since $\lambda(H)\le \lambda(\theta)$ whenever $H\ge \theta$, each high–entropy step contributes at least $1/\lambda(\theta)$ units in expectation; summing gives \eqref{eq:hazard-lb}.
\hfill$\square$

\paragraph{Information–theoretic lower bound.}
Suppose answering with error $\varepsilon$ over a label set of size $M$ requires mutual information
\begin{equation}
I^\dagger \;\triangleq\; \log M - h_2(\varepsilon) - \varepsilon\log(M-1)
\end{equation}an
(by Fano’s inequality).
Let $i_t \equiv I(A;o_t\mid h_{t-1})$ be per–step information about the (correct) answer $A$.
By data processing, $i_t \le H_t$.
Empirically, non–reasoning steps convey negligible information: there exists $\eta\ll 1$ such that $i_t\le \eta$ when $H_t<\theta$.
Let $\iota_\theta \equiv \max_{H\ge\theta} i(H)$.
Then
\begin{equation}
I^\dagger \;\le\; \sum_{t=1}^{\tau} i_t
\;\le\; \eta\,(L-N_{\mathrm{HE}}) \;+\; \iota_\theta\,N_{\mathrm{HE}}
\;\Rightarrow\;
N_{\mathrm{HE}}
\;\ge\;
\frac{I^\dagger - \eta\,\mathbb{E}[L]}{\iota_\theta-\eta},
\end{equation}
which rearranges to a linear lower bound of $\mathbb{E}[L]$ in terms of $\mathbb{E}[N_{\mathrm{HE}}]$.

\paragraph{Synthesis and testable predictions.}
The renewal argument (Theorem~1), the hazard view (Theorem~2), and the information budget bound jointly imply a \emph{stable, monotone, near–linear} relationship between reasoning length and high–entropy activity:
\begin{equation}
\mathbb{E}[L] \;\approx\; \alpha \;+\; \beta\,\mathbb{E}[N_{\mathrm{HE}}],\qquad \beta>0.
\end{equation}
This yields concrete diagnostics: (i) $\mathbb{E}[L\mid N_{\mathrm{HE}}]$ is monotone increasing; (ii) the slope $\beta$ correlates with the mean reasoning–excursion duration $1/q$; (iii) the estimated stopping hazard $\hat\lambda(H)$ is decreasing; and (iv) the per–step mutual information $i_t$ concentrates within high–entropy windows. As shown in Figure~\ref{fig:entropy_length_accuracy}, the number of high-entropy tokens grows approximately in tandem with response length
and accuracy, validating entropy as a principled proxy for reasoning
effort. Yet, this relationship is not uniform across all problems: as we
show next, the way entropy and reasoning length contribute to accuracy
differs substantially between easy and hard instances.

\section{Why \texorpdfstring{KL}{KL} Loss is a Valid \emph{Thinking Budget}}
\label{subsec:kl-as-budget}

We show that the KL loss used in AEPO is mathematically equivalent to enforcing
a \emph{budget} on the policy's deviation from the reference model, and that such
a budget provably controls the expected ``thinking'' cost
(e.g., reasoning length $L$ or entropy-based cost $C_{\text{ent}}$).

For a prompt $x$, let $\pi_\theta(\cdot\mid x)$ be the policy over full responses
$o\in\mathcal{Y}$, $\pi_{\mathrm{ref}}(\cdot\mid x)$ the frozen reference, and
$r(x,o)$ a task reward.
Let $c(x,o)\!\ge\!0$ be a \emph{thinking cost} (e.g., response length, or high-entropy
window cost), and define the shaped reward
$\tilde r(x,o)=r(x,o)-\lambda\,c(x,o)$ with $\lambda\!\ge\!0$.
Denote the (pathwise) KL divergence
$D_{\mathrm{KL}}\big(\pi_\theta(\cdot\mid x)\,\|\,\pi_{\mathrm{ref}}(\cdot\mid x)\big)$.

\noindent\textbf{Constrained formulation.}
We pose the \emph{budgeted} policy optimization:
\begin{equation}
\label{eq:budget-problem}
\max_{\pi_\theta}\;\;
\mathbb{E}_{x\sim\mathcal{D}}\Big[\mathbb{E}_{o\sim\pi_\theta(\cdot\mid x)}[\tilde r(x,o)]\Big]
\quad \text{s.t.}\quad
\mathbb{E}_{x\sim\mathcal{D}}
\Big[D_{\mathrm{KL}}\!\big(\pi_\theta(\cdot\mid x)\,\|\,\pi_{\mathrm{ref}}(\cdot\mid x)\big)\Big]
\;\le\; \delta ,
\end{equation}
where $\delta>0$ is the \emph{thinking budget}. The feasible set is convex because
$D_{\mathrm{KL}}(\cdot\|\pi_{\mathrm{ref}})$ is convex in its first argument.

\begin{lemma}[Strong duality and KL-as-budget]
\label{lem:duality}
Problem \eqref{eq:budget-problem} admits strong duality. Its Lagrangian is
\begin{equation}
\mathcal{L}(\pi,\kappa)
=
\mathbb{E}_{x}\,\mathbb{E}_{o\sim\pi(\cdot\mid x)}[\tilde r(x,o)]
\;-\;\kappa\,
\mathbb{E}_{x}\Big[D_{\mathrm{KL}}\!\big(\pi(\cdot\mid x)\,\|\,\pi_{\mathrm{ref}}(\cdot\mid x)\big)-\delta\Big],
\qquad \kappa\ge 0,
\end{equation}
and the optimal policy for fixed $x$ has the exponential-tilt form
\begin{equation}
\label{eq:opt-tilt}
\pi^*(o\mid x)
\;\propto\;
\pi_{\mathrm{ref}}(o\mid x)\;
\exp\!\Big(\tfrac{1}{\kappa}\,\tilde r(x,o)\Big).
\end{equation}
Moreover, at the dual optimum $\kappa^*$ the budget is \emph{active} unless
$\kappa^*$ hits its boundary: 
$\mathbb{E}_{x} D_{\mathrm{KL}}\!\big(\pi^*(\cdot\mid x)\,\|\,\pi_{\mathrm{ref}}(\cdot\mid x)\big)=\delta$.
\end{lemma}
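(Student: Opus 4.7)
My plan is to treat \eqref{eq:budget-problem} as a convex program over the space of conditional distributions $\{\pi(\cdot\mid x)\}_{x\in\mathcal{D}}$ and establish the three claims---strong duality, the exponential-tilt form, and complementary slackness---in that order. First I would observe that the objective $\mathbb{E}_x\mathbb{E}_{o\sim\pi(\cdot\mid x)}[\tilde r(x,o)]$ is linear in $\pi$ and the constraint set $\{\pi:\mathbb{E}_x D_{\mathrm{KL}}(\pi(\cdot\mid x)\|\pi_{\mathrm{ref}}(\cdot\mid x))\le \delta\}$ intersected with the product of probability simplices is convex (since $D_{\mathrm{KL}}(\cdot\|\pi_{\mathrm{ref}})$ is convex in its first argument and expectation preserves convexity). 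Slater's condition holds because $\pi=\pi_{\mathrm{ref}}$ is strictly feasible: $D_{\mathrm{KL}}(\pi_{\mathrm{ref}}\|\pi_{\mathrm{ref}})=0<\delta$. Invoking the standard strong duality theorem for convex programs with Slater's condition then yields strong duality and validates the Lagrangian stated in the lemma.

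Next I would derive \eqref{eq:opt-tilt} by noting that, for any fixed $\kappa\ge 0$, the Lagrangian decouples across prompts into the pointwise problem
\begin{equation*}
\max_{\pi(\cdot\mid x)\in\Delta(\mathcal{Y})}\;\sum_{o}\pi(o\mid x)\tilde r(x,o) \;-\; \kappa\sum_{o}\pi(o\mid x)\log\frac{\pi(o\mid x)}{\pi_{\mathrm{ref}}(o\mid x)},
\end{equation*}
which is a strictly concave maximization for $\kappa>0$ with a unique maximizer. Introducing a scalar multiplier $\mu(x)$ for the normalization constraint $\sum_o\pi(o\mid x)=1$ and setting the derivative to zero gives $\tilde r(x,o)-\kappa\log\tfrac{\pi(o\mid x)}{\pi_{\mathrm{ref}}(o\mid x)}-\kappa-\mu(x)=0$, whose unique solution is exactly \eqref{eq:opt-tilt} with normalizer $Z(x,\kappa)=\sum_{o}\pi_{\mathrm{ref}}(o\mid x)\exp(\tilde r(x,o)/\kappa)$. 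I would briefly note the limiting cases: as $\kappa\downarrow 0$, the tilt degenerates to placing mass on $\arg\max_o \tilde r(x,o)$; as $\kappa\to\infty$, it returns $\pi_{\mathrm{ref}}$.

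Finally, I would establish the active-budget claim via KKT complementary slackness: at the dual optimum, $\kappa^*\cdot(\mathbb{E}_x D_{\mathrm{KL}}(\pi^*\|\pi_{\mathrm{ref}})-\delta)=0$, so either $\kappa^*=0$ (the KL constraint is inactive and the tilt reduces to the unregularized maximizer of $\tilde r$, assumed feasible) or the constraint binds with equality. Monotonicity of $\kappa\mapsto \mathbb{E}_x D_{\mathrm{KL}}(\pi_\kappa\|\pi_{\mathrm{ref}})$---which is a decreasing function, as larger $\kappa$ penalizes deviation more heavily---ensures that for any interior $\kappa^*\in(0,\infty)$ the budget is met with equality, while boundary values $\kappa^*\in\{0,\infty\}$ correspond to unconstrained or fully reference-matching regimes. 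The main obstacle I anticipate is purely technical: verifying integrability of $\exp(\tilde r/\kappa)$ under $\pi_{\mathrm{ref}}$ so that $Z(x,\kappa)<\infty$, which I would handle by assuming $\tilde r$ is bounded above (a standard assumption satisfied by clipped task rewards minus a nonnegative thinking cost). Measurability of the optimizer in $x$ follows from the closed-form expression in \eqref{eq:opt-tilt}.
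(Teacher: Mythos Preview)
Your proposal is correct and follows essentially the same approach as the paper's own proof: linearity of the objective plus convexity of the KL constraint with Slater's condition via $\pi=\pi_{\mathrm{ref}}$ for strong duality, pointwise optimization under the simplex constraint (what the paper calls ``standard exponential-family calculus'') for the exponential-tilt form, and KKT complementary slackness for budget activity. Your version is simply more fleshed out---the paper's proof is three terse sentences---and the extra remarks on limiting cases, monotonicity of the KL in $\kappa$, and integrability of $\exp(\tilde r/\kappa)$ are welcome refinements rather than a different route.
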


\begin{proof}
The objective is linear in $\pi$ and the constraint set is convex with nonempty
interior (Slater's condition holds by taking $\pi=\pi_{\mathrm{ref}}$).
Thus strong duality holds. Optimizing $\mathcal{L}$ over $\pi(\cdot\mid x)$
under the simplex constraint yields \eqref{eq:opt-tilt} via standard
exponential-family calculus. Complementary slackness gives the budget activity.
\end{proof}

\paragraph{Token factorization and actor-only KL loss.}
For auto-regressive policies, by the chain rule,
\begin{equation}
\label{eq:chain-rule}
D_{\mathrm{KL}}\!\big(\pi_\theta(\cdot\mid x)\,\|\,\pi_{\mathrm{ref}}(\cdot\mid x)\big)
=
\mathbb{E}_{o\sim\pi_\theta}\Big[\sum_{t=1}^{|o|} 
D_{\mathrm{KL}}\!\big(\pi_\theta(\cdot\mid s_t)\,\|\,\pi_{\mathrm{ref}}(\cdot\mid s_t)\big)\Big],
\end{equation}
with $s_t=(x,o_{1:t-1})$.
Hence a \emph{per-token} KL loss in the actor objective
\[
\kappa\cdot \frac{1}{|o|}\sum_{t} \beta_t\,
D_{\mathrm{KL}}\!\big(\pi_\theta(\cdot\mid s_t)\,\|\,\pi_{\mathrm{ref}}(\cdot\mid s_t)\big)
\]
is precisely a discretization of the dual term in \eqref{eq:budget-problem};
weights $\beta_t\!\in\!(0,\infty)$ allow token-adaptive emphasis (e.g., relaxing
inside validated high-entropy windows). Because this term \emph{does not} enter
the advantage, it controls deviation magnitude independently of the task/entropy
signals that decide \emph{when} to explore.

\paragraph{Budget tracking via dual updates.}
Let $\widehat{\mathrm{KL}}$ be a moving-average estimate of the left-hand side of
\eqref{eq:budget-problem}. The multiplicative update
\begin{equation}
\label{eq:dual-update}
\kappa \;\leftarrow\; 
\mathrm{clip}\!\Big(\kappa\big(1+\alpha_\kappa(\widehat{\mathrm{KL}}/\delta-1)\big),\,
\kappa_{\min},\kappa_{\max}\Big)
\end{equation}
is a stochastic dual ascent that drives $\widehat{\mathrm{KL}}\!\to\!\delta$.
Under standard Robbins–Monro conditions on $\alpha_\kappa$ and bounded
variance, $\kappa_t\!\to\!\kappa^*$ a.s., and the primal iterates satisfy the
budget asymptotically. Thus the actor-only KL loss with controller
implements an \emph{operational thinking budget}.

\paragraph{From KL budget to \emph{thinking} budget.}
We now show that bounding KL controls the expected thinking cost
(e.g., $c=L$ or $c=C_{\text{ent}}$).

\begin{lemma}[Pinsker-type bound]
\label{lem:pinsker}
For any bounded $f:\mathcal{Y}\!\to\!\mathbb{R}$ with $\|f\|_\infty\le M$,
\begin{equation}
\Big|\mathbb{E}_{\pi_\theta}[f]-\mathbb{E}_{\pi_{\mathrm{ref}}}[f]\Big|
\;\le\;
M\,\sqrt{2\,D_{\mathrm{KL}}\!\big(\pi_\theta\|\pi_{\mathrm{ref}}\big)}.
\end{equation}
\end{lemma}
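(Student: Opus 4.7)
The plan is to chain two classical facts: a total-variation bound on differences of expectations of bounded functions, followed by Pinsker's inequality relating total variation and KL. Both pieces are textbook, so the real work reduces to tracking the constants carefully.

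First I would rewrite the left-hand side as a single integral against the signed measure $\pi_\theta-\pi_{\mathrm{ref}}$ and apply the crude bound
\[
\bigl|\mathbb{E}_{\pi_\theta}[f]-\mathbb{E}_{\pi_{\mathrm{ref}}}[f]\bigr|
= \Bigl|\int f\,d(\pi_\theta-\pi_{\mathrm{ref}})\Bigr|
\le \|f\|_\infty\,\|\pi_\theta-\pi_{\mathrm{ref}}\|_1.
\]
Using the standard identity $\|P-Q\|_1 = 2\,d_{TV}(P,Q)$ together with the hypothesis $\|f\|_\infty\le M$, this yields the intermediate inequality
\[
\bigl|\mathbb{E}_{\pi_\theta}[f]-\mathbb{E}_{\pi_{\mathrm{ref}}}[f]\bigr|
\;\le\; 2M\,d_{TV}(\pi_\theta,\pi_{\mathrm{ref}}).
\]

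Second, I would invoke Pinsker's inequality,
\[
d_{TV}(\pi_\theta,\pi_{\mathrm{ref}})
\;\le\; \sqrt{\tfrac{1}{2}\,D_{\mathrm{KL}}\!\big(\pi_\theta\,\|\,\pi_{\mathrm{ref}}\big)},
\]
and substitute into the previous display to conclude
\[
\bigl|\mathbb{E}_{\pi_\theta}[f]-\mathbb{E}_{\pi_{\mathrm{ref}}}[f]\bigr|
\;\le\; 2M\sqrt{\tfrac{1}{2}\,D_{\mathrm{KL}}\!\big(\pi_\theta\,\|\,\pi_{\mathrm{ref}}\big)}
\;=\; M\sqrt{2\,D_{\mathrm{KL}}\!\big(\pi_\theta\,\|\,\pi_{\mathrm{ref}}\big)},
\]
which is exactly the claimed bound.

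There is no genuine mathematical obstacle; both ingredients are entirely classical and combine in one line. The only minor subtlety is bookkeeping of the constant: one must use the \emph{normalized} total-variation distance $d_{TV}=\tfrac12\|P-Q\|_1$ consistently across the two steps, so that the factor of $2$ generated by passing from the $L^1$ norm to $d_{TV}$ cancels against the $\tfrac12$ inside Pinsker's square root and delivers the exact coefficient $\sqrt{2}$ advertised in the lemma. A sharper variant could be obtained by first centering $f$ at the midpoint of its range (replacing $\|f\|_\infty$ with the half-oscillation $\tfrac12(\sup f-\inf f)\le M$), but the statement as written already follows directly from the uncentered version above.
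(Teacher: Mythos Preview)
Your argument is correct and is exactly the standard derivation: bound the difference of expectations by $\|f\|_\infty$ times the $L^1$ distance, convert to total variation, and then apply Pinsker's inequality, with the constants tracked correctly. The paper itself does not supply a proof for this lemma---it states the bound as a classical fact and immediately uses it---so your write-up is precisely the natural justification the paper omits.
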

Thus if $f$ is the (clipped) reasoning length or normalized entropy cost,
a \emph{global} KL budget $\delta$ implies a corresponding bound on the change of its expectation.

\begin{theorem}[Donsker–Varadhan control of moment budgets]
\label{thm:dv}
For any $\eta>0$ and measurable $f$,
\begin{equation}
\mathbb{E}_{\pi_\theta}[f]
\;\le\;
\frac{1}{\eta}\log \mathbb{E}_{\pi_{\mathrm{ref}}}\!\big[e^{\eta f}\big]
\;+\;
\frac{1}{\eta}\,D_{\mathrm{KL}}\!\big(\pi_\theta\|\pi_{\mathrm{ref}}\big).
\label{eq:dv}
\end{equation}
Consequently, under the budget $D_{\mathrm{KL}}\le \delta$,
\begin{equation}
\mathbb{E}_{\pi_\theta}[f]
\;\le\;
\inf_{\eta>0}\Big\{\frac{1}{\eta}\log \mathbb{E}_{\pi_{\mathrm{ref}}}[e^{\eta f}]
+\frac{\delta}{\eta}\Big\}.
\end{equation}
\end{theorem}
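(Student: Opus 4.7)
The plan is to derive the main inequality \eqref{eq:dv} directly from the Donsker--Varadhan (change-of-measure) variational representation of the KL divergence, and then specialize and optimize in $\eta$ under the budget constraint.

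First, I would establish the underlying variational identity. For any measurable $g$ with $Z_g \triangleq \mathbb{E}_{\pi_{\mathrm{ref}}}[e^{g}] < \infty$, define the Gibbs tilt $d\mu_g/d\pi_{\mathrm{ref}} = e^{g}/Z_g$. Writing the Radon--Nikodym chain
\[
\log\frac{d\pi_\theta}{d\pi_{\mathrm{ref}}} = \log\frac{d\pi_\theta}{d\mu_g} + g - \log Z_g
\]
and integrating under $\pi_\theta$ gives the decomposition
\[
D_{\mathrm{KL}}(\pi_\theta\|\pi_{\mathrm{ref}}) = D_{\mathrm{KL}}(\pi_\theta\|\mu_g) + \mathbb{E}_{\pi_\theta}[g] - \log Z_g .
\]
Gibbs' inequality $D_{\mathrm{KL}}(\pi_\theta\|\mu_g)\ge 0$ then yields the fundamental bound $\mathbb{E}_{\pi_\theta}[g] \le \log \mathbb{E}_{\pi_{\mathrm{ref}}}[e^{g}] + D_{\mathrm{KL}}(\pi_\theta\|\pi_{\mathrm{ref}})$, with equality iff $\pi_\theta=\mu_g$.

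Second, substituting $g=\eta f$ for any $\eta>0$ and dividing by $\eta$ produces \eqref{eq:dv} directly. Third, invoking the KL budget $D_{\mathrm{KL}}(\pi_\theta\|\pi_{\mathrm{ref}})\le \delta$ to replace the second summand by $\delta/\eta$, and then taking $\inf_{\eta>0}$ over the resulting family of upper bounds, gives the consequence stated in the theorem.

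The main obstacle is not a deep technical one but rather a regularity check: the bound is vacuous unless $\mathbb{E}_{\pi_{\mathrm{ref}}}[e^{\eta f}]$ is finite for a useful range of $\eta$. For typical thinking costs used in the paper (clipped length $L$, bounded $f$ via Lemma~\ref{lem:pinsker}, or normalized entropy cost $C_{\mathrm{ent}}/Z$) this holds on all of $\mathbb{R}_+$, so the infimum is well-defined; for sub-Gaussian $f$ with proxy $\sigma^2$ one recovers the closed-form bound $\mathbb{E}_{\pi_\theta}[f] - \mathbb{E}_{\pi_{\mathrm{ref}}}[f] \le \sqrt{2\sigma^2\delta}$ by minimizing in $\eta$, tightening Lemma~\ref{lem:pinsker} in the Gaussian regime. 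The only care required is that when $f$ is only known to be integrable but not sub-exponential (e.g., unclipped reasoning length), one must either pre-clip $f$ or restrict $\eta$ to the effective domain $\{\eta: \mathbb{E}_{\pi_{\mathrm{ref}}}[e^{\eta f}]<\infty\}$ before taking the infimum.
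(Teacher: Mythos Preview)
Your proposal is correct and follows the same approach as the paper: the paper's proof is a one-line invocation of the Donsker--Varadhan variational inequality, and you have simply spelled out the standard derivation of that inequality via the Gibbs-tilt decomposition $D_{\mathrm{KL}}(\pi_\theta\|\pi_{\mathrm{ref}}) = D_{\mathrm{KL}}(\pi_\theta\|\mu_g) + \mathbb{E}_{\pi_\theta}[g] - \log Z_g$ before specializing to $g=\eta f$. Your added regularity remarks on the effective domain of $\eta$ and the sub-Gaussian closed form are useful but go beyond what the paper records.
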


\begin{proof}
\eqref{eq:dv} is the Donsker–Varadhan variational inequality obtained by
upper-bounding the log-moment generating function via relative entropy.
\end{proof}

Taking $f=c(x,o)$ (e.g., $L$ or $C_{\text{ent}}$) shows that a KL budget
\emph{upper-bounds} the expected thinking cost relative to the reference
through the reference MGF, i.e., the policy cannot arbitrarily increase
reasoning length or entropy cost without paying KL.

\paragraph{Mirror-descent / trust-region view.}
Maximizing $\mathbb{E}[\tilde r]$ under a \emph{local} KL ball
$D_{\mathrm{KL}}(\pi\|\pi_{\mathrm{old}})\!\le\!\delta$ yields the
mirror-descent (natural-gradient) update
\begin{equation}
\pi^* \;\propto\; \pi_{\mathrm{old}}\,\exp\!\big(\tfrac{1}{\kappa}\,\tilde r\big),
\quad
\text{with}\;\;\kappa\;\asymp\;\frac{1}{\sqrt{\delta}},
\end{equation}
so $\delta$ is precisely the \emph{trust-region radius}: a smaller (larger)
budget forces smaller (larger) policy movement and therefore smaller (larger)
allowable growth of $c$ by Lemma~\ref{lem:pinsker}/Theorem~\ref{thm:dv}.
In auto-regressive models, \eqref{eq:chain-rule} further identifies the
budget with a \emph{tokenwise} sum of deviations, making the KL loss a
time–additive \emph{budget meter} of exploration.

\paragraph{Token-adaptive windows preserve convexity.}
Let $\{w_t\}$ be nonnegative weights (e.g., $w_t\!\in\![\rho,1]$ with $\rho<1$
inside validated reasoning windows). The weighted budget
$\sum_t w_t\,D_{\mathrm{KL}}(\pi_t\|\pi_{t}^{\mathrm{ref}})$ remains convex in
$\{\pi_t\}$ and admits the same dual treatment; hence the actor-only term
$\kappa\!\sum_t w_t\,\mathrm{kld}_t$ is still a valid Lagrangian penalty for a
\emph{window-relaxed} KL budget. This realizes the intuition:
\emph{``relax KL where we intend to think''} without breaking the budgeting
semantics.

(i) Strong duality turns the KL loss into the Lagrange multiplier of the
reference-deviation budget; (ii) dual updates \eqref{eq:dual-update} track the
target $\delta$; (iii) information inequalities (Pinsker, Donsker–Varadhan)
translate a KL budget into explicit upper bounds on the expected thinking cost.
Therefore, the actor-only KL loss in AEPO is not merely a regularizer: it is a
\emph{principled and operational thinking budget}.

\section{Fisher–Geometry Justification of AEPO}
\label{app:fisher-aepo}

This subsection formalizes why the proposed \emph{Adaptive–Entropy Policy Optimization} (AEPO) is
principled from the viewpoint of information geometry. We show that AEPO is equivalent to a
\emph{token–reweighted natural–gradient update under a per–difficulty trust region}, and that the
entropy–augmented advantage and token–adaptive KL together maximize improvement in the directions
that matter for reasoning.

\paragraph{Setup.}
Given a prompt $x$, the auto–regressive policy $\pi_\theta$ generates a response
$o=(o_{1:L})$ with states $s_t=(x,o_{<t})$. Let $r(x,o)$ be the task reward and
$C_{\mathrm{ent}}(x,o)$ the entropy–window cost (Section~\ref{sec:prelim:window}).
For a difficulty bucket $d=d(x)$, AEPO solves the following constrained problem:
\begin{equation}
\label{eq:fisher-primal}
\begin{aligned}
\max_{\pi_\theta}\quad 
& \mathbb{E}_{x\sim\mathcal{D}}\,
  \mathbb{E}_{o\sim \pi_\theta(\cdot\mid x)}
  \Big[\, r(x,o)\;-\;\lambda_{d(x)}\,C_{\mathrm{ent}}(x,o)\,\Big] \\[4pt]
\text{s.t.}\quad 
& \mathbb{E}_{x\sim\mathcal{D}}\,
  \mathbb{E}_{o\sim \pi_\theta(\cdot\mid x)}
  \Bigg[\,
    \frac{1}{L(o)}\sum_{t=1}^{L(o)}
    \beta_{d(x),t}\;
    D_{\mathrm{KL}}\!\Big(
      \pi_\theta(\cdot\mid s_t)\,\Big\|\,\pi_{\mathrm{ref}}(\cdot\mid s_t)
    \Big)
  \,\Bigg]
  \;\le\; \delta_{d(x)} \, .
\end{aligned}
\end{equation}
Here $L(o)$ denotes the response length and $s_t=(x,o_{<t})$.
The token weight is $\beta_{d,t}=\beta_d\,\rho_t$, where
$\rho_t\in(0,1)$ if $t$ lies in a validated high–entropy window and
$\rho_t=1$ otherwise.

\paragraph{KL as a Fisher quadratic.}
Let $F_t$ denote the token–wise Fisher information matrix under the reference policy $\pi_{\mathrm{ref}}$:
\[
F_t
= \mathbb{E}_{a\sim \pi_{\mathrm{ref}}(\cdot\mid s_t)}
\big[
\nabla \log \pi_{\mathrm{ref}}(a\mid s_t)\;
\nabla \log \pi_{\mathrm{ref}}(a\mid s_t)^\top
\big].
\]
For a small parameter displacement $\Delta\theta$, the per–token KL admits the standard
second–order approximation
\begin{equation}
\label{eq:fisher-klquad}
D_{\mathrm{KL}}\!\big(\pi_{\theta+\Delta\theta}(\cdot\mid s_t)\,\big\|\,\pi_{\mathrm{ref}}(\cdot\mid s_t)\big)
\;=\;
\tfrac{1}{2}\,\Delta\theta^\top F_t\,\Delta\theta
\;+\; o(\|\Delta\theta\|^2).
\end{equation}

\begin{lemma}[Weighted Fisher trust region]
\label{lem:weighted-fisher}
Under the approximation \eqref{eq:fisher-klquad}, the KL constraint in
\eqref{eq:fisher-primal} is equivalent to the quadratic trust region
\(
\frac{1}{2}\,\Delta\theta^\top F_\beta\,\Delta\theta \le \delta_d,
\)
with the token–reweighted Fisher
\begin{equation}
\label{eq:fisher-Fbeta}
F_\beta \;\triangleq\; \frac{1}{L}\sum_{t=1}^{L} \beta_{d,t}\,F_t,
\qquad
\beta_{d,t}=\beta_d\,\rho_t,\ \ \rho_t\in(0,1]\!.
\end{equation}
\end{lemma}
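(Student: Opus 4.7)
The plan is to reduce the constraint in \eqref{eq:fisher-primal} to a single scalar quadratic form in the parameter displacement $\Delta\theta$ by substituting the second-order KL expansion \eqref{eq:fisher-klquad} at each token, interchanging the finite sum with the expectation, and then repackaging the token-weighted sum of per-token Fishers as the single matrix $F_\beta$ defined in \eqref{eq:fisher-Fbeta}. Because the lemma only claims a local (trust-region) equivalence, I only need the expansion to hold uniformly along the segment from $\theta$ to $\theta+\Delta\theta$, which is a standard second-order Taylor statement controlled by the spectral norm of the Hessian of KL in a small neighborhood.

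First I would fix a prompt $x$ and a realized response $o$ of length $L=L(o)$, and write the inner KL sum as
\begin{equation*}
\frac{1}{L}\sum_{t=1}^{L}\beta_{d,t}\,D_{\mathrm{KL}}\!\big(\pi_{\theta+\Delta\theta}(\cdot\mid s_t)\,\big\|\,\pi_{\mathrm{ref}}(\cdot\mid s_t)\big)
= \frac{1}{2}\,\Delta\theta^\top\!\left(\frac{1}{L}\sum_{t=1}^{L}\beta_{d,t}\,F_t\right)\!\Delta\theta + o(\|\Delta\theta\|^2),
\end{equation*}
which is an immediate consequence of \eqref{eq:fisher-klquad} applied term by term, since the $\beta_{d,t}$ are deterministic given $d$ and $t$, and sums of $o(\|\Delta\theta\|^2)$ remainders over a finite index set are again $o(\|\Delta\theta\|^2)$. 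Next I would take the outer expectation over $(x,o)\sim \mathcal{D}\times \pi_\theta(\cdot\mid x)$ and pull $\Delta\theta$ out of the expectation, which yields $\tfrac12\Delta\theta^\top F_\beta\,\Delta\theta+o(\|\Delta\theta\|^2)$ with $F_\beta$ exactly the reweighted average in \eqref{eq:fisher-Fbeta}. Finally, equating this with the budget $\delta_d$ (as done in \eqref{eq:fisher-primal}) produces the claimed trust region.

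The main obstacle is reconciling two subtleties. First, the expansion \eqref{eq:fisher-klquad} is about KL to $\pi_{\mathrm{ref}}$, yet the AEPO sampling distribution is $\pi_\theta$; to absorb this I would invoke the fact that, in the relevant trust-region regime $\|\Delta\theta\|\to 0$, the change-of-measure factor $\pi_\theta/\pi_{\mathrm{ref}}$ equals $1+O(\|\Delta\theta\|)$, so the difference between expectations under $\pi_\theta$ and $\pi_{\mathrm{ref}}$ contributes only to the $o(\|\Delta\theta\|^2)$ remainder. Second, the response length $L(o)$ is random and possibly unbounded, which means the interchange of sum and expectation requires either a uniform moment bound on $L$ or a truncation argument; assuming bounded response length (as is standard and enforced in our implementation via the max sequence length) disposes of this cleanly and preserves the $o(\|\Delta\theta\|^2)$ bound on the remainder.

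Once these two points are handled, the equivalence is algebraic: the constraint in \eqref{eq:fisher-primal} becomes $\tfrac12\Delta\theta^\top F_\beta\Delta\theta\le \delta_d$ up to higher-order terms, which is precisely the stated weighted quadratic trust region and sets the stage for the natural-gradient interpretation of AEPO invoked in the sequel.
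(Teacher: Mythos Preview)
Your proposal is correct and follows essentially the same approach as the paper: substitute the second-order expansion \eqref{eq:fisher-klquad} at each token, then use linearity to collect the weighted sum into the single matrix $F_\beta$. The paper's own proof is the one-liner ``Average \eqref{eq:fisher-klquad} over $t$ with weights $\beta_{d,t}$ and use linearity,'' so your extra care about the outer expectation, the $\pi_\theta$ vs.\ $\pi_{\mathrm{ref}}$ change of measure, and random $L$ is more rigor than the paper invests, not a different route.
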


\begin{proof}
Average \eqref{eq:fisher-klquad} over $t$ with weights $\beta_{d,t}$ and use linearity.
\end{proof}

\paragraph{Dual/Lagrangian form and natural gradient.}
Introducing a Lagrange multiplier $\kappa_d\!\ge\!0$, the inner (fixed–$d$) problem becomes
\begin{equation}
\label{eq:fisher-inner}
\max_{\Delta\theta}\ \ 
g^\top \Delta\theta
\;-\;
\tfrac{\kappa_d}{2}\,\Delta\theta^\top F_\beta\,\Delta\theta,
\qquad
g \;\triangleq\; \nabla_\theta\,
\mathbb{E}\!\big[r(x,o)-\lambda_{d}\,C_{\mathrm{ent}}(x,o)\big],
\end{equation}
whose maximizer is the natural–gradient step
\begin{equation}
\label{eq:fisher-natgrad}
\boxed{\ \Delta\theta^\star \;=\; \kappa_d^{-1}\,F_\beta^{-1}\,g\ }.
\end{equation}

\begin{proposition}[One–step improvement bound]
\label{prop:fisher-gain}
Under \eqref{eq:fisher-klquad}, the first–order improvement at $\Delta\theta^\star$ satisfies
\begin{equation}
\label{eq:fisher-gain}
\Delta J
\;\triangleq\;
g^\top\Delta\theta^\star
\;-\;
\tfrac{\kappa_d}{2}\,{\Delta\theta^\star}^\top F_\beta \Delta\theta^\star
\;=\;
\frac{1}{2\kappa_d}\; g^\top F_\beta^{-1} g.
\end{equation}
\end{proposition}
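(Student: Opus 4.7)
The plan is to verify Proposition~\ref{prop:fisher-gain} by direct substitution of the closed-form maximizer $\Delta\theta^\star=\kappa_d^{-1}F_\beta^{-1}g$ from \eqref{eq:fisher-natgrad} into the quadratic surrogate of \eqref{eq:fisher-inner}, and then to simplify using the symmetry and invertibility of the weighted Fisher $F_\beta$. Because \eqref{eq:fisher-inner} is a strictly concave quadratic in $\Delta\theta$ (by Lemma~\ref{lem:weighted-fisher}, $F_\beta\succeq 0$, and assuming the rank condition discussed below), first-order optimality already guarantees that $\Delta\theta^\star$ attains the maximum, so evaluating the objective at $\Delta\theta^\star$ returns $\Delta J$ exactly.

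First I would compute the linear contribution, $g^{\top}\Delta\theta^\star = \kappa_d^{-1}\,g^{\top}F_\beta^{-1}g$, using that $F_\beta^{-1}$ is symmetric since $F_\beta=\tfrac{1}{L}\sum_t\beta_{d,t}F_t$ is a nonnegative combination of symmetric PSD matrices. Next I would expand the quadratic penalty and telescope $F_\beta^{-1}F_\beta F_\beta^{-1}=F_\beta^{-1}$:
\[
\tfrac{\kappa_d}{2}\,{\Delta\theta^\star}^{\top}F_\beta\,\Delta\theta^\star
\;=\; \tfrac{\kappa_d}{2}\,\kappa_d^{-2}\,g^{\top}F_\beta^{-1}F_\beta F_\beta^{-1}g
\;=\; \tfrac{1}{2\kappa_d}\,g^{\top}F_\beta^{-1}g .
\]
Subtracting the penalty from the linear gain then yields
\[
\Delta J \;=\; \tfrac{1}{\kappa_d}\,g^{\top}F_\beta^{-1}g \;-\; \tfrac{1}{2\kappa_d}\,g^{\top}F_\beta^{-1}g \;=\; \tfrac{1}{2\kappa_d}\,g^{\top}F_\beta^{-1}g ,
\]
which is exactly \eqref{eq:fisher-gain}. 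Equivalently, this is the standard ``half-the-linear-term'' identity for an unconstrained concave quadratic evaluated at its maximizer, applied here with metric $F_\beta$.

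The main (mild) obstacle is regularity: the statement presumes that $F_\beta$ is invertible so that $\Delta\theta^\star$ and the scalar $g^{\top}F_\beta^{-1}g$ are well defined. Each $F_t$ is symmetric PSD and the weights $\beta_{d,t}=\beta_d\rho_t$ are strictly positive by construction, so $F_\beta\succeq 0$ is immediate; strict positive definiteness further requires that the per-token score vectors $\nabla\log\pi_{\mathrm{ref}}(\cdot\mid s_t)$ span $\mathbb{R}^{\dim\theta}$ across the averaged tokens. I would therefore either state this non-degeneracy as a standing assumption (as is customary for natural-gradient arguments) or replace $F_\beta$ by the damped surrogate $F_\beta+\epsilon I$, after which the identical algebra yields \eqref{eq:fisher-gain} verbatim with $F_\beta^{-1}$ read as $(F_\beta+\epsilon I)^{-1}$, so the quantitative conclusion is unaffected up to an $O(\epsilon)$ correction that vanishes as $\epsilon\to 0^+$.
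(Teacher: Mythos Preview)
Your proof is correct and follows exactly the paper's approach: the paper's own proof is the single line ``Plug \eqref{eq:fisher-natgrad} into \eqref{eq:fisher-inner},'' which is precisely the direct substitution and simplification you carry out. Your added remarks on the invertibility of $F_\beta$ (and the damped surrogate) go beyond what the paper states but are entirely reasonable technical caveats.
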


\begin{proof}
Plug \eqref{eq:fisher-natgrad} into \eqref{eq:fisher-inner}.
\end{proof}

\paragraph{Why token–adaptive relaxation helps.}
Inside validated reasoning windows we choose $\rho_t<1$, which reduces the local curvature
contribution in $F_\beta$ on those tokens. Simultaneously, the entropy–augmented advantage
increases the corresponding components of $g$ only where $\bar H_{t:w}\ge\vartheta$.
Because the gain \eqref{eq:fisher-gain} increases when (i) $g$ has larger energy and
(ii) the metric penalty $F_\beta$ is smaller in the same directions, AEPO aligns a boosted
gradient with a relaxed Fisher metric on reasoning windows.

\begin{corollary}[Directional benefit of window relaxation]
\label{cor:directional}
Let $u$ be a unit vector supported on window tokens. If $\rho_t$ is reduced on these tokens,
then $u^\top F_\beta u$ decreases while $u^\top g$ increases under entropy shaping. Hence the
directional contribution $(u^\top g)^2/(2\kappa_d\,u^\top F_\beta u)$ to
\eqref{eq:fisher-gain} strictly increases, improving sample efficiency on reasoning segments.
\end{corollary}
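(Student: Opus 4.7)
The plan is to track the numerator and denominator of the directional gain $(u^\top g)^2/(2\kappa_d\,u^\top F_\beta u)$ separately under window relaxation, and then combine monotonicity on each side. The key structural observation is that both the Fisher metric $F_\beta$ in \eqref{eq:fisher-Fbeta} and the entropy-shaping contribution to $g$ decompose as weighted sums over tokens, with the relaxation only perturbing terms indexed by the validated window set $\mathcal{W}$.

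First I would analyze the denominator. By \eqref{eq:fisher-Fbeta},
\begin{equation*}
u^\top F_\beta u \;=\; \frac{\beta_d}{L}\sum_t \rho_t\,(u^\top F_t u),
\end{equation*}
which is a nonnegative linear functional of the weights $\{\rho_t\}$ since each $F_t \succeq 0$. Interpreting ``$u$ supported on window tokens'' as $u^\top F_t u = 0$ for $t\notin \mathcal{W}$, only window terms contribute. Replacing $\rho_t$ by $\rho_t' < \rho_t$ on $\mathcal{W}$ therefore weakly decreases $u^\top F_\beta u$, and strictly decreases it provided $u^\top F_t u > 0$ for at least one $t\in\mathcal{W}$ (a mild nondegeneracy condition ruling out $u$ lying in the kernel of every window Fisher).

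Next I would analyze the numerator. Expanding $g$ via the REINFORCE identity and inserting the shaped advantage, the entropy-shaping contribution to $g$ takes the form $\mathbb{E}[\sum_t \nabla_\theta \log \pi_\theta(o_t\mid s_t)\,\psi_{i,t}]$, where $\psi_{i,t}$ (Section~\ref{how much to explore}) is nonzero only when $m_{i,t}=1$, i.e., inside $\mathcal{W}$. Its projection on $u$ is therefore a sum over $\mathcal{W}$ of score-function inner products weighted by $\psi_{i,t}$. In the hard-bucket regime, where $\psi_{i,t}\ge 0$ encourages further exploration on failed trajectories (equivalently, choosing $u$ aligned with the effective shaping gradient on $\mathcal{W}$), this extra term is nonnegative and strictly positive once $u$ is not orthogonal to the window score directions.

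Combining, the denominator is nonincreasing while the numerator is nondecreasing (each strictly under the nondegeneracy/alignment hypotheses above), so $(u^\top g)^2/(2\kappa_d\,u^\top F_\beta u)$ strictly increases. The main obstacle, which I would address by making the hypothesis on $u$ explicit, is that for the easy and medium buckets the shaping signal may point either way, so ``increases $u^\top g$'' must be read directionally: one selects $u$ aligned with the effective entropy-shaping gradient restricted to $\mathcal{W}$, which removes sign ambiguity across all three difficulty regimes. A secondary subtlety is that $F_\beta$ is defined at $\pi_{\mathrm{ref}}$ whereas $\psi_{i,t}$ is evaluated at $\pi_\theta$; since the underlying quadratic expansion \eqref{eq:fisher-klquad} already presupposes $\theta \approx \theta_{\mathrm{ref}}$, this mismatch is $o(1)$ and does not disturb the monotone comparison.
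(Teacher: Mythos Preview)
Your proposal is correct and follows the same route as the paper: decompose the directional gain into numerator and denominator, use the linearity of $F_\beta=\tfrac{\beta_d}{L}\sum_t\rho_t F_t$ in the weights to show the quadratic form drops when $\rho_t$ is lowered on window tokens, and invoke the window-localized entropy shaping to show the projected gradient grows. The paper in fact provides no formal proof of the corollary beyond the preceding heuristic paragraph, so your write-up is already more careful than the original---in particular your explicit nondegeneracy condition and your remark that for easy/medium buckets one must read ``$u^\top g$ increases'' as alignment with the effective shaping direction are clarifications the paper omits.
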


\paragraph{Why use \emph{non–window} KL for control.}
The KL controller adjusts $\kappa_d$ to match the per–bucket target $\delta_d$.
If we fed the controller with the already–relaxed (window–weighted) KL, the true
natural length $\tfrac{1}{2}\Delta\theta^\top(\tfrac{1}{L}\sum_t F_t)\Delta\theta$ could be
under–estimated, risking drift. Using a non–window (or down–weighted–window) control signal
$\mathrm{KL}_{\text{ctrl}}$ stabilizes the global trust region while keeping local relaxation intact.

\paragraph{Per–bucket decoupling: shape vs.\ scale.}
The token weights $\beta_{d,t}$ determine the \emph{shape} of the metric $F_\beta$
(where to allow deviation), whereas the multiplier $\kappa_d$ determines its \emph{scale}
(how much to deviate) for each difficulty bucket. This yields a well–conditioned,
two–time–scale scheme: slow adaptation of $\kappa_d$ (budget tracking) and fast ascent in
\eqref{eq:fisher-inner}.

AEPO corresponds to a per–difficulty, token–reweighted natural–gradient update
\[
\Delta\theta^\star
\;=\;
\kappa_d^{-1}\Big(\tfrac{1}{L}\sum_{t} \beta_{d,t}\,F_t\Big)^{-1}
\underbrace{\nabla_\theta\mathbb{E}\!\big[r-\lambda_d\,C_{\mathrm{ent}}\big]}_{\text{entropy–shaped }g},
\]
with $\beta_{d,t}=\beta_d\rho_t$ relaxing KL strictly within high–entropy windows and
$\kappa_d$ controlled by a non–window KL signal. Proposition~\ref{prop:fisher-gain}
shows this maximizes $g^\top F_\beta^{-1} g/(2\kappa_d)$ by (i) amplifying $g$ where
the model is genuinely reasoning and (ii) reducing curvature along the same directions,
while a stable bucket–wise trust region ensures controlled deviation from
$\pi_{\mathrm{ref}}$.

\section{Case Study}

\begin{tcolorbox}[question]
\subsubsection*{Question}
As shown in the figure, in the right triangle one acute angle is $60^\circ$ and the side opposite this angle has length $6\sqrt{3}$. Find the hypotenuse $y$.

\begin{center}
\begin{tikzpicture}[scale=1.0]
  \coordinate (A) at (0,0);
  \coordinate (B) at (3,0);
  \coordinate (C) at (3,3*1.2); 
  \draw[very thick,blue!60!black] (A)--(B)--(C)--cycle;
  \draw[magenta,thick] (B) rectangle +(0.25,0.25);
  \node[below] at ($(A)!0.5!(B)$) {$x$};
  \node[left]  at ($(A)!0.55!(C)$) {$y$};
  \node[right] at ($(B)!0.5!(C)$) {$6\sqrt{3}$};
  \node[below left] at (A) {$60^\circ$};
\end{tikzpicture}
\end{center}

\textbf{Answer:} $\boxed{12}$
\end{tcolorbox}

\begin{tcolorbox}[response]
\subsubsection*{Our Adaptive Model Solution for easy question}
Use the sine ratio for the $60^\circ$ angle:
\[
\sin 60^\circ = \frac{\text{opposite}}{\text{hypotenuse}} = \frac{6\sqrt{3}}{y}.
\]

Since $\sin 60^\circ = \tfrac{\sqrt{3}}{2}$, we have
\[
\frac{\sqrt{3}}{2} = \frac{6\sqrt{3}}{y}
\quad\Longrightarrow\quad
y = \frac{6\sqrt{3}\cdot 2}{\sqrt{3}} = 12.
\]

\[
\boxed{y=12}
\]
\end{tcolorbox}

\begin{tcolorbox}[question]
\subsubsection*{Question}
The sum of the circumferences of circles $H$, $J$, and $K$ is $56\pi$ units.  
Their radii are marked as $r_H=x$, $r_J=2x$, $r_K=4x$, and the three circles are
pairwise tangent externally. Find the distance $KJ$ between the centers of circles $K$ and $J$.

\begin{center}
\includegraphics[width=0.55\textwidth]{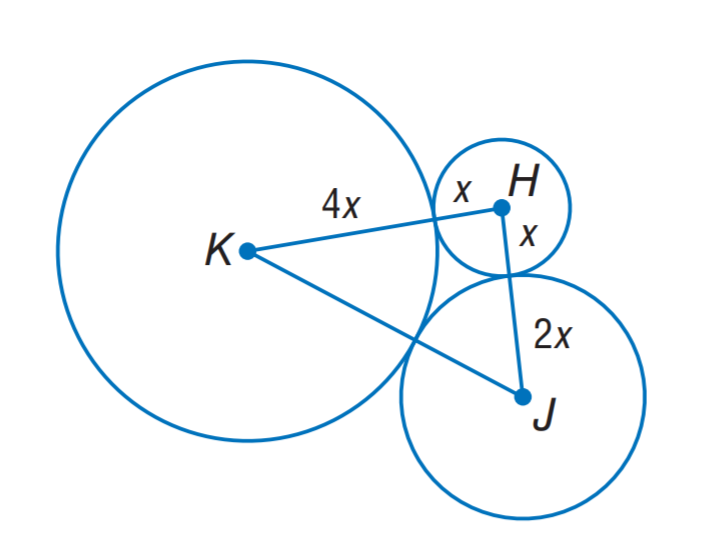}
\end{center}
\end{tcolorbox}

\begin{tcolorbox}[response1]
\subsubsection*{Our Adaptive Model Solution for hard question}

{\color{blue}\textbf{First}}, recall that the circumference of a circle is $2\pi r$.  
The problem states that the sum of the circumferences of circles $H,J,K$ is $56\pi$, so
\[
2\pi(r_H+r_J+r_K)=56\pi \;\Longrightarrow\; r_H+r_J+r_K=28.
\]

{\color{red}\textbf{At this point}}, I examined the diagram and noticed the labels $4x,\,x,\,2x$.  
I hesitated: {\color{red}are these radii, or are they distances between centers?}  
If interpreted as center distances, then
\[
KH=4x,\quad HJ=x,\quad JK=2x,
\]
but this quickly fails because $x+2x=3x<4x$, violating the triangle inequality.  
{\color{red}Therefore}, this interpretation cannot be correct.

{\color{blue}\textbf{Next}}, I considered that these labels instead denote the radii:
\[
r_H=x,\quad r_J=2x,\quad r_K=4x.
\]
Substituting into $r_H+r_J+r_K=28$ yields
\[
x+2x+4x=28 \;\Longrightarrow\; 7x=28 \;\Longrightarrow\; x=4.
\]

{\color{red}\textbf{However}}, doubts persisted. For instance,  
{\color{red}if one misreads $JK$ as $2x$}, then $KJ$ would be $8$,  
which {\color{red}contradicts} the tangent-circle rule requiring
\[
KJ=r_K+r_J=4x+2x=6x=24.
\]
… Here I went back and forth, testing whether the labels might be mixed as radii or distances,  
but each alternative led to inconsistencies.  
{\color{red}For example}, assuming $KH=4x$ as a distance forces $r_K=3x$ if tangent to $r_H=x$,  
yet then $HJ$ should be $3x$ not $x$, another contradiction.  
These cycles of reasoning strongly suggested that the only consistent interpretation is that  
the labels $4x,\,x,\,2x$ represent the radii themselves.

{\color{blue}\textbf{Therefore}}, with $x=4$ we compute the actual radii:
\[
r_K=16,\quad r_H=4,\quad r_J=8.
\]
Since circles $K$ and $J$ are externally tangent, the distance between their centers is
\[
KJ=r_K+r_J=16+8=24.
\]

{\color{blue}\textbf{Finally}}, check consistency:
\[
2\pi(r_H+r_J+r_K)=2\pi(4+8+16)=2\pi(28)=56\pi,
\]
matching the given condition exactly.

\[
\boxed{24}
\]
\end{tcolorbox}

\begin{tcolorbox}[question]
\subsubsection*{Question}
In $\odot W$, arcs $\widehat{RS}$ and $\widehat{TV}$ are congruent.  
Find the length of $RS$.

\begin{center}
\includegraphics[width=0.45\textwidth]{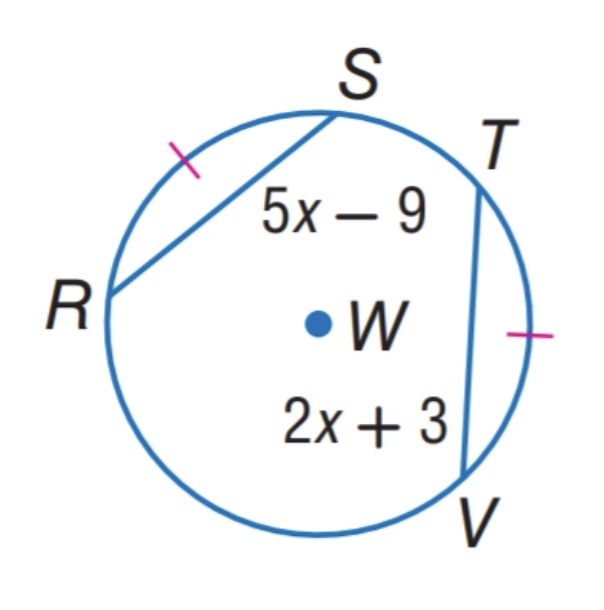}
\end{center}
\end{tcolorbox}

\begin{tcolorbox}[reward_case_response]
\subsubsection*{Question with appropriate number of High-Entropy Tokens}

{\small
\setlength{\tabcolsep}{1pt}
\begin{tabular}{lll}
\textbf{difficulty:} Easy &
\textbf{Gen\_high\_entropy\_token:} 20 &
\textbf{Threshold(easy):} 28 \\
\textbf{Entropy reward:} 0.0 &
\textbf{acc reward:} 1.0 &
\textbf{overall reward:} 1.0 \\
\end{tabular}\par
}


{\color{blue}\textbf{Okay, let's see.}} I need to find the length of $RS$ in circle $\odot W$. The problem says the arcs $RS$ and $TV$ are congruent.  

{\color{red}\textbf{First, recall}} that in a circle, congruent arcs correspond to congruent chords.  
{\color{red}\textbf{Therefore}}, if arc $RS \cong$ arc $TV$, then chord $RS \cong$ chord $TV$.  

\medskip

The problem gives:  
\[
RS = 5x - 9, \quad TV = 2x + 3.
\]

{\color{red}\textbf{So we set them equal:}}  
\[
5x - 9 = 2x + 3.
\]

{\color{red}\textbf{Solving:}}  
\[
3x - 9 = 3 \quad \Longrightarrow \quad 3x = 12 \quad \Longrightarrow \quad x = 4.
\]

\medskip

{\color{blue}\textbf{Now check:}}  
\[
RS = 5(4) - 9 = 11, \quad TV = 2(4) + 3 = 11.
\]

\medskip

{\color{red}\textbf{Wait, let me double-check.}} Are RS and TV the correct chords? Arc $RS$ corresponds to chord $RS$, arc $TV$ to chord $TV$. Since arcs are congruent, chords must be equal. Yes, consistent.  

\medskip

{\color{red}\textbf{Thus}}, the required length is  
\[
\boxed{11}.
\]

\end{tcolorbox}

\begin{tcolorbox}[question]
\subsubsection*{Question}
In $\triangle ABC$, the side lengths are $a=14$, $b=48$, and $c=50$.  
Find $\cos B$.

\begin{center}
\includegraphics[width=0.55\textwidth]{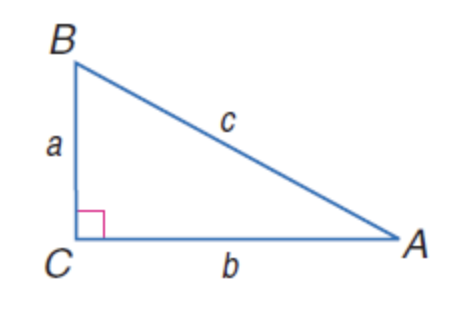}
\end{center}
\end{tcolorbox}

\begin{tcolorbox}[reward_case_response]
\subsubsection*{Question with Excess High-Entropy Tokens}

{\small
\setlength{\tabcolsep}{1pt}
\begin{tabular}{lll}
\textbf{difficulty:} Easy &
\textbf{Gen\_high\_entropy\_token:} 51 &
\textbf{Threshold(easy):} 28 \\
\textbf{Entropy reward:} -0.5 &
\textbf{acc reward:} 1.0 &
\textbf{overall reward:} 0.5 \\
\end{tabular}\par
}


{\color{blue}\textbf{Okay, so}} I need to find $\cos B$ given a triangle with sides $a=14$, $b=48$, and $c=50$.  
{\color{red}\textbf{Let me recall}} the Law of Cosines: 
\[
c^2 = a^2 + b^2 - 2ab\cos C.
\]

\medskip

{\color{blue}\textbf{First}}, label the triangle properly. Side $a$ is opposite angle $A$, $b$ opposite $B$, and $c$ opposite $C$. The diagram shows $\angle C=90^\circ$.  
{\color{red}\textbf{Therefore}}, $c=50$ must be the hypotenuse, and $a=14$, $b=48$ are the legs.  

\medskip

{\color{red}\textbf{Now}}, angle $B$ is at vertex $B$, between sides $BC$ ($a=14$) and $AB$ ($c=50$). The opposite side is $AC$ ($b=48$).  
{\color{red}\textbf{Thus}}, by definition:
\[
\cos B = \frac{\text{adjacent}}{\text{hypotenuse}} = \frac{a}{c} = \frac{14}{50} = \frac{7}{25}.
\]

\medskip

{\color{blue}\textbf{But let me confirm}} with the Law of Cosines. For angle $B$:  
\[
\cos B = \frac{a^2 + c^2 - b^2}{2ac}.
\]

{\color{red}\textbf{Substitute carefully:}}
\[
\cos B = \frac{14^2 + 50^2 - 48^2}{2\cdot 14 \cdot 50} 
= \frac{196 + 2500 - 2304}{1400} 
= \frac{392}{1400} 
= \frac{7}{25}.
\]

\medskip

{\color{red}\textbf{Wait, let me double-check.}} If $a=14$, $b=48$, $c=50$, then indeed $14^2+48^2=196+2304=2500=50^2$, confirming it is a right triangle.  
{\color{red}\textbf{Hence}} the direct trigonometric ratio already suffices.  

\medskip

\ldots (several “wait” moments, checking whether sides match standard notation, omitted for brevity) \ldots  

\medskip

{\color{blue}\textbf{Finally}}, both direct definition and Law of Cosines give the same result.  
{\color{red}\textbf{Therefore}}, the value is consistent:
\[
\cos B = \frac{7}{25} \approx 0.28.
\]

\[
\boxed{\tfrac{7}{25}}
\]
\end{tcolorbox}

\begin{tcolorbox}[question]
\subsubsection*{Question}
Find $x$.

\begin{center}
\includegraphics[width=0.45\textwidth]{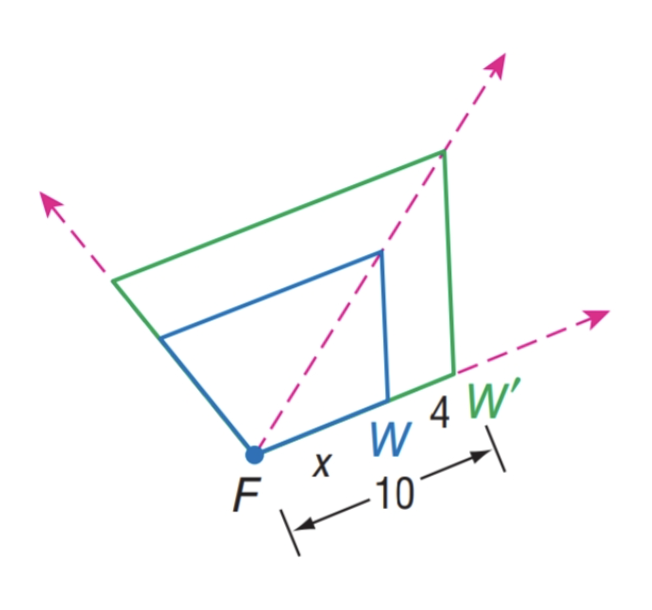}
\end{center}
\end{tcolorbox}

\begin{tcolorbox}[reward_case_response]
\subsubsection*{Medium Question with excess High-Entropy Tokens (Incorrect Answer)}

{\small
\setlength{\tabcolsep}{1pt}
\begin{tabular}{lll}
\textbf{difficulty:} Medium &
\textbf{Gen\_high\_entropy\_token:} 84 &
\textbf{Threshold(medium):} 55 \\
\textbf{Entropy reward:} 0.38 &
\textbf{acc reward:} 0.0 &
\textbf{overall reward:} 0.38 \\
\end{tabular}\par
}


{\color{blue}\textbf{Okay, let's try}} to figure out how to find $x$. The problem involves a diagram with a green square and a blue square connected. There is a point $F$ with a segment of length $10$, and another segment labeled $4$.  

{\color{red}\textbf{Wait, maybe}} the diagram is a cube net, since dashed lines often indicate hidden edges. If $FW=10$ and $WW'=4$, perhaps $x$ is the distance from $F$ to $W'$.  

\medskip

{\color{red}\textbf{Alternatively}}, maybe this is related to vectors or a triangle. If $x$, $10$, and $4$ form sides of a right triangle, then
\[
x^2 = 10^2 + 4^2 = 116, \quad x = \sqrt{116} = 2\sqrt{29}.
\]

\medskip

{\color{red}\textbf{But hold on}}, maybe the 10 and 4 are not perpendicular. Then by the Law of Cosines:
\[
x^2 = 10^2 + 4^2 - 2\cdot 10 \cdot 4 \cos \theta,
\]
but without $\theta$, this is indeterminate.  

\medskip

{\color{blue}\textbf{Next}}, let me try coordinates. Suppose $F=(0,0,0)$.  
{\color{red}\textbf{If}} $W=(10,0,0)$ and $W'=(10,4,0)$, then
\[
FW' = \sqrt{10^2+4^2} = \sqrt{116}.
\]  

{\color{red}\textbf{Alternatively}}, if $W'=(10,0,4)$, the distance is also $\sqrt{116}$.  

\medskip

\ldots (several further “wait” moments about whether this is a parallelogram, trapezoid, or 3D projection are omitted here) \ldots  

\medskip

{\color{red}\textbf{Another thought}}: perhaps FW and WW' are adjacent edges of a cube net, so $x$ is the straight-line distance from $F$ to $W'$.  
{\color{red}\textbf{Therefore}}, again,
\[
x=\sqrt{10^2+4^2}=2\sqrt{29}.
\]

\medskip

{\color{red}\textbf{Yet}}, could it be colinear? If so, $x=10+4=14$. But that seems inconsistent with the diagram.  

{\color{red}\textbf{Maybe}} it’s a right triangle hidden in the net. If FW is one leg, WW' another, then $x$ is the hypotenuse.  

{\color{blue}\textbf{Finally}}, after multiple conflicting attempts, the reasoning converges to
\[
x = 2\sqrt{29}.
\]

\end{tcolorbox}

\begin{tcolorbox}[question]
\subsubsection*{Question}
Find y.

\begin{center}
\includegraphics[width=0.35\textwidth]{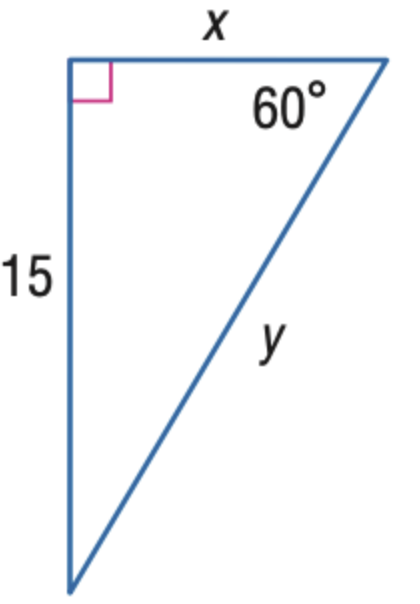}
\end{center}
\end{tcolorbox}

\begin{tcolorbox}[reward_case_response]
\subsubsection*{Medium Question with Few High-Entropy Tokens (Incorrect Answer)}

{\small
\setlength{\tabcolsep}{1pt}
\begin{tabular}{lll}
\textbf{difficulty:} Medium &
\textbf{Gen\_high\_entropy\_token:} 24 &
\textbf{Threshold(medium):} 55 \\
\textbf{Entropy reward:} 0.19 &
\textbf{acc reward:} 0.0 &
\textbf{overall reward:} 0.19 \\
\end{tabular}\par
}


{\color{blue}\textbf{Okay, so}} I need to find the value of $y$ in this triangle. The diagram shows a right triangle with a $60^\circ$ angle, a side of length $15$, and the hypotenuse labeled $y$.  

{\color{blue}\textbf{Since}} it is a right-angled triangle, the other non-right angle must be $30^\circ$.  

\medskip

{\color{red}\textbf{In a 30-60-90 triangle}}, the sides are in ratio $1 : \sqrt{3} : 2$. The side opposite $30^\circ$ is the shortest.  

\medskip

{\color{blue}\textbf{If}} the side of length $15$ is opposite $30^\circ$, then the hypotenuse should be $2 \times 15 = 30$.  

\medskip

{\color{blue}\textbf{Alternatively}}, using $\cos 60^\circ = \tfrac{1}{2}$:  
\[
\cos 60^\circ = \frac{15}{y} \quad \Longrightarrow \quad y = 30.
\]

\medskip

{\color{red}\textbf{But}}, in the problem statement the side labeled $15$ might not match this assumption. If it were opposite $60^\circ$, the calculation would change.  

\ldots (further checks on labeling are omitted) \ldots  

\medskip

{\color{blue}\textbf{Finally}}, the reasoning settled on
\[
y = 30.
\]

\medskip

\end{tcolorbox}

\end{document}